\title{Stochastic Bandit Models for Delayed Conversions} 
\author{%
	 {\bf Claire Vernade }\textsuperscript{1,2}, {\bf Olivier Capp\'e}\textsuperscript{1,3}, {\bf Vianney Perchet}\textsuperscript{1,4,5} \\[1ex]
	1 Universit\'e Paris-Saclay \\
	2 LTCI, Telecom ParisTech, \\
	3 LIMSI, CNRS \\
	4 CMLA, ENS Paris-Saclay \\
	5 Criteo Research 
}
\date{}
\newcommand{\BlackBox}{\rule{1.5ex}{1.5ex}}
\renewenvironment{proof}{\par\noindent{\bfseries\upshape
		Proof\ }}{\hfill\BlackBox\\[2mm]}
\newtheorem{theorem}{Theorem}
\newtheorem{lemma}[theorem]{Lemma}
\newtheorem{proposition}[theorem]{Proposition}
\newtheorem{remark}[theorem]{Remark}
\newtheorem{corollary}[theorem]{Corollary}
\DeclareMathOperator*{\argmax}{arg\,max}
\newcommand{\todoo}[2][]{\todo[color=Yellow!15,size=\tiny,#1]{O: #2}} 
\newcommand{\todov}[2][]{\todo[color=LimeGreen!15,size=\tiny,#1]{V: #2}} 
\newcommand{\delayedUCB}{\textsc DelayedUCB}
\newcommand{\delayedKLUCB}{\textsc DelayedKLUCB}
\newcommand{\dP}{d_{\mathrm{Pois}}}
\begin{document}

\maketitle

\begin{abstract} 
  Online advertising and product recommendation are important domains of applications for multi-armed bandit methods. In these fields, the reward that is immediately available is most often only a proxy for the actual outcome of interest, which we refer to as a \emph{conversion}. For instance, in web advertising, clicks can be observed within a few seconds after an ad display but the corresponding sale --if any-- will take hours, if not days to happen. This paper proposes and investigates a new stochastic multi-armed bandit model in the framework proposed by Chapelle (2014) --based on empirical studies in the field of web advertising-- in which each action may trigger a future reward that will then happen with a stochastic delay. We assume that the probability of conversion associated with each action is unknown while the distribution of the conversion delay is known, distinguishing between the (idealized) case where the conversion events may be observed whatever their delay and the more realistic setting in which late conversions are censored.
We provide performance lower bounds as well as two simple but efficient algorithms based on the UCB and KLUCB frameworks. The latter algorithm, which is preferable when conversion rates are low, is based on a Poissonization argument,  of independent interest in other  settings where aggregation of Bernoulli observations with different success probabilities is required.
\end{abstract} 








\todoo{TBD}
\section{INTRODUCTION}
\label{sec:introduction}

Characterizing the relationship between marketing actions and users' decisions is
of prime importance in advertising, product recommendation and customer
relationship management. In online advertising a key aspect of the problem is
that whereas marketing actions can be taken very fast --typically in less than
a tenth of a second, if we think of an ad display--, user's buying decisions
will happen at a much slower rate
\cite{ji2016probabilistic,chapelle2014modeling,rosales2012post}. In the
following, we refer to a user's decision of interest under the generic term of
\emph{conversion}. Chapelle, in \cite{chapelle2014modeling}, while analyzing data
from the real-time bidding company Criteo, observed that, on average, only 35\%
of the conversions occurred within the first hour. Furthermore, about 13\% of the
conversions could be attributed to ad display that were more than two weeks
old. Another interesting observation from this work is the fact that the delay
distribution could be reasonably well fitted by an exponential distribution,
particularly when conditioning on context variables that are available to the
advertiser.

The present work addresses the problem of sequentially learning to select
relevant items in the context where the feedback happens with long delays. By
long we mean in particular that the feedback associated with a fraction of the
actions taken by the learner will be practically unobserved because they will
happen with an excessive delay. In the example cited above, if we were to run
an online algorithm during two weeks, at least 13\% of the actions would not
receive an observable feedback because of delays. A related situation occurs if
the online algorithm is run during, say, one month, but its memory is limited
to a sliding window of two weeks. In Section~\ref{sec:model} below we introduce
models suitable for addressing these two related situations in the framework of
multi-armed bandits. 

Delayed feedback is a topic that has been considered before in the
reinforcement learning literature and we defer the precise comparison between existing approaches
and the proposed framework to Section~\ref{sec:related}. In a nutshell however,
the distinctive features of our approach is to consider potentially infinite
stochastic delays, resulting in some feedback being \emph{censored} (ie. not
observable). Existing works on delayed bandits focus on cases where the
feedback is observed after some delay,  typically assumed to be
finite. In contrast, we assume that delays are random with a distribution
that may have an unbounded support -- although we  require that it
has finite expectation. As a result, some conversion events cannot be
observed within any finite horizon and the proposed learning algorithm must
take this uncertainty into account.



In Section~\ref{sec:model}, we propose discrete-time stochastic multi-armed
bandit models to address the problem of long delays with possibly censored
feedback. We distinguish two situations that correspond to the cases mentioned
informally above: In the \emph{uncensored} model, conversions can be assumed to
be eventually observed after some possibly arbitrarily long delay; In the
\emph{censored} model, it is assumed that the environment imposes that the
conversions related to actions cannot be observed anymore after a finite window
of $m$ time steps.

Assuming that the delay distribution is known, we prove problem-dependent
lower bounds on the regret of any uniformly efficient bandit algorithm for the
censored and uncensored models in Section~\ref{sec:lower-bound}.

In Section~\ref{sec:indices}, we describe efficient anytime policies relying on
optimistic indices, based on the UCB \cite{auer2002finite} or
KLUCB \cite{garivier2011klucb} algorithms. The latter uses a Poissonization
argument that can be of independent interest in other bandit models. In typical
scenarios where the conversion rates are less than one percent, using the KLUCB
variant will ensure a much faster learning and provides near-optimal perfomance on the long run (see Theorem~\ref{th:klucb}).

These algorithms are analyzed in Section~\ref{sec:algorithms}, showing that
they reach close to optimal asymptotic performance. In
Section~\ref{sec:experiments} we discuss the implementation of these
methods, showing that it is further simplified in the case of geometrically
distributed delays, and we illustrate their performance on simulated
data.


\section{A STOCHASTIC MODEL FOR THE DELAYS}
\label{sec:model}

We now describe our bandit setting for delayed conversion events, inspired by \cite{chapelle2014modeling}. We first consider the setting in which delays may be potentially unbounded and then consider the case where censoring occurs.

\subsection{GENERAL BANDIT MODEL UNDER DELAYED FEEDBACK}

At each round $t \in \mathds{N}^*$, the learner chooses an arm $A_t \in \{1,\dots,K\}$. This action simultaneously triggers two independent random variables: \vspace{-.3cm}
\begin{itemize}
	\item $C_t \in \{0,1\}$, is the \emph{conversion indicator} that is equal to 1 only if the action $A_t$ will  lead to a conversion;\vspace{-.3cm}
	\item $D_t \in \mathds{N}$, is \emph{the delay} indicating the number of time steps needed before the conversion -- if any -- be disclosed to the learner. \vspace{-.3cm}
\end{itemize}

 At each round $t$, the agent then receives an integer-valued reward $Y_t$ which corresponds to the number of observed conversions at time $t$:
 \[
 Y_t = \sum_{s=1}^t C_s \mathds{1}\{D_s = t-s \} . 
 \]
 In the following, we will use the short-hand notation $X_{s,t} = C_s \mathds{1}\{D_s \leq  t-s \}$, for $s \leq t$ to denote the possible contribution of the action taken at time $s$ to the conversion(s) observed at a later time $t$.
We emphasize that even if the actual reward of the learner is the sum of the conversions, we assume that the agent also observes all the individual contributions $(X_{s,t})_{1 \leq s \leq t}$ at time $t$ triggered by actions taken before time $t$.

The above mechanism implies that if $C_t = 1$, the learner will observe $D_t$ at time $t+D_t$, whereas if $C_t = 0$, the delay will not be directly observable. In particular, if at time $t$, $X_{s,u} = 0$, for all $s \leq u \leq t$, it is impossible to decide whether $C_s = 0$ or if $C_s = 1$ but $D_s > t-s$. 
Formally, the history of the algorithm is the $\sigma$-field generated by $\mathcal{H}_t:=(X_{s,u})_{1 \leq u \leq t, 1 \leq s \leq u}$.
\todov{Changed it to gain some lines}

We consider the stochastic model under the following basic assumptions:
\begin{align*}
& C_t | \mathcal{H}_{t-1} \sim \operatorname{Bernoulli}(\theta_{A_t}) ,\\
& D_t | \mathcal{H}_{t-1} \sim \text{distribution  with CDF $\tau$},
\end{align*}
and $C_t,D_t$ are conditionally independent given $\mathcal{H}_{t-1}$.

\begin{lemma}
\label{lemma:regret}
 Denote by $a^* \in \{1,\dots,K\}$ an index such that $\theta_a^* \geq \theta_k$, for $k\in \{1,\dots,K\}$, and define by $r(T) = \sum_{t=1}^T Y_t$ the cumulated reward of the learner and by $r^*(T)$ the cumulated reward obtained by an oracle playing $A_t = a^*$ at each round. The expected regret of the learner is given by
  \begin{equation}
    \label{eq:regret}
    L(T) = \mathds{E}\left[r^*(T)-r(T)\right] = \sum_{s=1}^T \mathds{E}\left[\theta_{a^*}-\theta_{A_s}\right] \tau_{T-s} 
  \end{equation}
  where by definition $\tau_{T-s} = \mathds{P}(D_s \leq T-s)$.
  Denoting  $\mathds{E}[N_k(T)]:= \sum_{s=1}^{T-1} \mathds{1}\{A_s=k\}$, it holds that 
  $$
  L (T) \leq \sum_{k=1}^K (\theta_{a^\star}-\theta_k) N_k(t)
  $$
  and if $\mu = \mathds{E}[D_s]<\infty$,
  \begin{equation}
        \label{eq:regret:relationship}
        \sum_{k=1}^K (\theta_{a^\star}-\theta_k) N_k(t) - L (T) \leq  \mu \sum_{k=1}^K (\theta_{a^\star}- \theta_k).
  \end{equation}
\end{lemma}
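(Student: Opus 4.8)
The plan is to establish the exact identity \eqref{eq:regret} first and then obtain both inequalities from it by elementary bounds; throughout I interpret $N_k(T)$ in the inequalities as the expected count $\mathds{E}[N_k(T)]=\sum_{s=1}^T\mathds{P}(A_s=k)$, since the left-hand sides are deterministic. Starting from $r(T)=\sum_{t=1}^T Y_t=\sum_{t=1}^T\sum_{s=1}^t C_s\mathds{1}\{D_s=t-s\}$, I would exchange the order of summation (for fixed $s$ the index $t$ runs from $s$ to $T$) to collapse the inner sum of indicators into a single event, obtaining
\begin{equation*}
r(T)=\sum_{s=1}^T C_s\,\mathds{1}\{D_s\le T-s\}=\sum_{s=1}^T X_{s,T}.
\end{equation*}
Taking expectations term by term and conditioning on $\mathcal{H}_{s-1}$, I would use that $A_s$ is $\mathcal{H}_{s-1}$-measurable, so $\mathds{E}[C_s\mid\mathcal{H}_{s-1}]=\theta_{A_s}$, together with the conditional independence of $C_s$ and $D_s$ and the fact that the delay law does not depend on the history, which gives $\mathds{E}[C_s\mathds{1}\{D_s\le T-s\}]=\mathds{E}[\theta_{A_s}]\,\tau_{T-s}$. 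Applying the same computation to the oracle (with $A_s=a^*$ deterministically) and subtracting yields \eqref{eq:regret}.

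For the first inequality, since each gap $\theta_{a^*}-\theta_{A_s}\ge 0$ and $\tau_{T-s}=\mathds{P}(D_s\le T-s)\le 1$, I would drop the factor $\tau_{T-s}$ and regroup the resulting sum by the played arm, writing $\theta_{a^*}-\theta_{A_s}=\sum_{k=1}^K(\theta_{a^*}-\theta_k)\mathds{1}\{A_s=k\}$ and summing over $s$ to recognise $\sum_{s=1}^T\mathds{P}(A_s=k)=\mathds{E}[N_k(T)]$, giving $L(T)\le\sum_{k=1}^K(\theta_{a^*}-\theta_k)\mathds{E}[N_k(T)]$.

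The relationship \eqref{eq:regret:relationship} is where the finite-mean assumption enters, and I expect it to be the only step needing a genuine idea. Subtracting \eqref{eq:regret} from the upper bound and regrouping by arm gives
\begin{equation*}
\sum_{k=1}^K(\theta_{a^*}-\theta_k)\mathds{E}[N_k(T)]-L(T)=\sum_{k=1}^K(\theta_{a^*}-\theta_k)\sum_{s=1}^T\mathds{P}(A_s=k)\,(1-\tau_{T-s}).
\end{equation*}
I would then bound $\mathds{P}(A_s=k)\le 1$ and reindex the inner sum by $j=T-s$, so that $\sum_{s=1}^T(1-\tau_{T-s})=\sum_{j=0}^{T-1}\mathds{P}(D_s>j)$. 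The key observation is the tail-sum identity $\mathds{E}[D_s]=\sum_{j=0}^{\infty}\mathds{P}(D_s>j)$, valid for a non-negative integer-valued delay, which bounds the inner sum by $\mu$ uniformly in $k$; factoring this out produces exactly the right-hand side of \eqref{eq:regret:relationship}. The main obstacle is recognising that the truncated tail sum is controlled by $\mu$ via this identity; the remaining manipulations (Fubini, conditioning, regrouping by arm) are routine, the only delicate point being the measurability of $A_s$ with respect to $\mathcal{H}_{s-1}$ used in the conditioning step.
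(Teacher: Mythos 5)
Your proposal is correct and follows essentially the same route as the paper's proof: exchange the order of summation to write $r(T)=\sum_{s=1}^T X_{s,T}$, take expectations (the paper does this less explicitly, without spelling out the conditioning on $\mathcal{H}_{s-1}$), drop $\tau_{T-s}\le 1$ for the first inequality, and bound the per-arm tail sum $\sum_{s}(1-\tau_{T-s})$ by $\mu=\sum_{j\ge 0}\mathds{P}(D>j)$ for the second. Your reading of $N_k(T)$ as the expected count is the right way to make sense of the paper's slightly inconsistent notation.
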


\begin{proof}
The cumulated reward at time $T$ satisfies
\begin{multline*}
	r(T) = \sum_{t=1}^T Y_t = \sum_{t=1}^T \sum_{s=1}^t C_s \boldsymbol{1}\{D_s = t-s \} \nonumber \\
	= \sum_{s=1}^T C_s  \boldsymbol{1}\{D_s \leq T-s \} = \sum_{s=1}^T X_{s,T},
\end{multline*}
where the index $T$ stands for the time at which all past conversions are observed while $s$ is the index at which the action has been taken. Hence Eq.~\eqref{eq:regret} is obtained by
\begin{align*}
	 \mathds{E} \left[ \sum_{t=1}^T r(T) \right] &= \mathds{E} \left[ \sum_{t=1}^T Y_t \right]\\
	 & = \sum_{s=1}^T  \mathds{E} \left[ X_{s,T}\right] =\sum_{s=1}^{T} \theta_{A_s} \tau_{T-s}.
\end{align*}

Obviously the fact that $\tau_{T-s} \leq 1$ implies that $L(T)$ is upper bounded by $\sum_{k=1}^K (\theta_{a^\star}-\theta_k) N_k(t)$, which corresponds to the usual regret formula in the bandit model with explicit immediate feedback. To upper bound the difference, note that
\begin{align*}
  \sum_{k=1}^K & (\theta_{a^\star}-\theta_k) N_k(t) - L(T) \\
   & = \sum_{k=1}^K (\theta_{a^\star}-\theta_k) \sum_{s=1}^T \boldsymbol{1}\{A_s=k\} (1-\tau_{T-s}) \\
   & \leq \sum_{k=1}^K (\theta_{a^\star}-\theta_k) \sum_{n=0}^{\infty} (1-\tau_{n}) 
   = \mu \sum_{k=1}^K (\theta_{a^\star}-\theta_k).  
\end{align*}
\end{proof}

\subsection{THRESHOLDED DELAYS: CENSORED OBSERVATIONS}

The model with $m$-thresholded delays takes into account the fact that a conversion can only be observed within $m$ steps after the action occurred. This basically changes the expression of the expected instantaneous reward $Y_t$ which becomes,
	\[
	Y_t = \sum_{s=t-m}^t C_s \boldsymbol{1}\{D_s = t-s \}
	\]
and the future contributions of each action are capped to the next $m$ time steps: $(X_{s,t})_{t-m \leq s \leq t}$.
The history of the algorithm only consists of $\mathcal{H}_t = \sigma((X_{s,u})_{1 \leq u \leq t, u-m \leq s \leq u})$
and the regret expression of Lemma~\ref{lemma:regret} can be split into two terms corresponding to old pulls and the $m$ most recent pulls:
\begin{equation}
\label{eq:regret_th}
\sum_{s=1}^{T-m} (\theta_{a^\star}-\mathds{E}[\theta_{A_s}]) \tau_{m}  + \sum_{s=T-m+1}^T (\theta_{a^\star}-\mathds{E}[\theta_{A_s}]) \tau_{T-s} 
\end{equation}

In the remaining, for $(p,q)\in [0,1]^2$, we will denote by $d(p,q)=p\log(p/q)+(1-p)\log((1-p)/(1-q))$ 
the binary entropy between $p$ and $q$, that is the Kullback-Leibler divergence between Bernoulli distributions with parameters $p$ and $q$. Moreover, without loss of generality, we will assume that $a^*=1$ is the unique optimal arm of the considered bandit problems and denote by $\theta^* = \theta_1$ the optimal conversion rate.

\todov{Why did we remove the following (I found it interesting):

" It is possible to recast censored observations as un-censored observations by modifying the probability of conversion $\theta_k$ and the CDF, namely by defining $\theta'_k=\theta_k\tau_m$ and $\tau's=\tau_s/\tau_m$. However, we are able to derive more precise and powerful results with censored than un-censored data thus we prefer to keep and describe both models. "}

\section{RELATED WORK ON DELAYED BANDITS} 
\label{sec:related}

Delayed feedback recently received increasing attention in the bandit and online learning literature due to its various applications ranging from online advertising \cite{chapelle2014modeling} to distributed optimization \cite{jun2016top,cesa2016delay}.
Indeed, delayed feedback have been extensively considered in the context of Markov Decision Processes (MDPs) \cite{katsikopoulos2003markov,walsh2009learning}. However, the present work focuses on unbounded delays and the models considered therein would result in an infinite space MDP for which even the planning problem would be challenging. In contrast, the lack of memory in bandits makes it possible to propose relatively simple algorithms even in the case where the delays may be very long.
For a review of previous works in online learning in the stochastic and non-stochastic settings, see \cite{joulani2013onlinelong} and references therein. The latter work tackles the 
more general problem of partial monitoring under delayed feedback, with Sections 3.2 and 4 of the paper focusing on the stochastic delayed bandit problem. A key insight from this work is that, in minimax analysis, delay increases the regret in a multiplicative
way in adversarial problems, and in an additive way in stochastic problems.

The algorithm of \cite{joulani2013online} relies on a queuing principle termed
\textsc{Q-PMD} that uses an optimistic bandit referred to as ``BASE'' to
perform exploration; in~\cite{joulani2013online} UCB is chosen as BASE strategy
while the follow-up work \cite{joulani2013onlinelong} also considers the use of
KLUCB.  The idea is to store all the observations that arrive at the same time
$t$ in a FIFO buffer and to feed BASE with the information related to an arm
$k$ only when this arm is about to be chosen. It means that the number of
draws of an arm as well as the cumulated sum of the subsequent rewards are only
updated whenever the observation arrives to the learner. Meanwhile, the
algorithm acts as if nothing happened.

However, in the setting considered in the present work, updating counts only
after the observations are eventually received cannot lead to a practical
algorithm: Whenever a click is received, the associated reward is 1 by
definition, otherwise the ambiguity between non-received and negative feedback
remains.  Thus, the empirical average of the rewards for each arm computed by
the updating mechanism of \textsc{Q-PMD} sticks to 1 and does not allow to
compare the arms. As a consequence, the \textsc{Q-PMD} policy cannot be used
for the models described in Section~\ref{sec:model}, except in the specific
case of the uncensored delay model with bounded delays: Then there is no
censoring anymore as one only needs to wait long enough (longer than the
maximal possible delay) to reveal with certainty the exact value of the
feedback.

Also, \cite{mandel2015queue} notices that the empirical performances of this
queuing-based heuristic are not fully satisfying because of the lack of
variability in the decisions made by the policy while waiting for
feedback. Their suggestion is to use random policies instead of deterministic
ones in order to improve the overall exploration.  Note that even though we
stick to deterministic, history-based, policies, this problem is taken care of
by our algorithm thanks to the use of the CDF of the delays that allow us to
correct the confidence intervals continuously after a pull has been made.

Another possible way to handle bounded delays would be to plan ahead the
sequence of pulls by batches, following the principles of Explore Then Commit,
see \cite{PerchetAl2016}. With finite delays, a new un-necessary batch of
exploration pulls might be started before the algorithm enters the exploitation
(or commitment) phase. The extra cost would therefore be the maximal observable
delay. Although these techniques are random and not deterministic, they have
the same drawbacks as the other ones: The policy is not updated while waiting
for feedback and, as a consequence, cannot handle arbitrarily large delays.

An obvious limitation of our work is that we assume that the delay distribution
is known. We believe that it is a realistic assumption however as the delay
distribution can be identified from historical data as reported
in~\cite{chapelle2014modeling}. In addition, as we assume that the same delay
distribution is shared by all actions, it is natural to expect that estimating
the delay distribution on-line can be done at no additional cost in terms of
performance. Perhaps more interestingly, it is possible to extend the model so
as to include cases where the context of each action is available to the
learner and determines the distribution of the corresponding delay, using for
instance the generalized linear modeling of~\cite{chapelle2014modeling}. In
particular, the same algorithms can be used in this case, by considering the
proper CDFs corresponding to different instances. Of
course the analysis to be described below would need to be extended to cover
also this contextual case.


\section{LOWER BOUND ON THE REGRET}
\label{sec:lower-bound}
The purpose of this section is to provide lower bounds on the regret of \emph{uniformly efficient} 
algorithms in the two different settings of the Stochastic Delayed Bandit problem that we consider.
This class of policies, introduced by \cite{lai1985asymptotically}, refers to algorithms such that for any
bandit model $\nu$, and any $\alpha \in (0,1)$, $\mathds{E}[R(T)]/T^\alpha \to 0$ when $T\to \infty$.

Our results rely on changes of measure argument that are encapsulated in Lemma~1 of \cite{kaufmann2015complexity}, 
or more recently, and more generally, in Inequality (F) of \cite{garivier2017explore}. Those results can actually be reformulated as a lower bound
on the expected log-likelihood ratio of the observations under the originally considered bandit model $\theta$ and the alternative one $\theta'$
\[
\mathds{E}[\ell_T]=\mathds{E}_\theta\left[ \frac{p_\theta((X_{s,t})_{1 \leq t \leq T, 1 \leq s \leq t})}{p_{\theta'}((X_{s,t})_{1 \leq t \leq T, 1 \leq s \leq t})}\right] .
\]

The following inequality is obtained using proof techniques from Appendix B of \cite{kaufmann2015complexity} that are detailed in Appendix~\ref{ap:lower_bound}.

\begin{equation}
\label{eq:llr_bound}
\liminf_{T\to \infty} \frac{\mathds{E}[\ell_T]}{\log(T)} \geq 1	.
\end{equation}

To obtain explicit regret lower bounds for the models introduced in Section~\ref{sec:model}, we compute below the expected log-likelihood ratio corresponding to these two models.

\begin{lemma}
\label{lem:llr-uncens}
	In the censored delayed feedback setting, the expected log-likelihood ratio is given by
	\begin{align*}
	\mathds{E}_\theta\left[ \ell_T\right] =& \sum_{s=1}^{T-m} d(\theta_{A_s}\tau_{m}, \theta_{A_s}'\tau_{m}) \\
	& + \sum_{s=T-m}^T d(\theta_{A_s}\tau_{T-s}, \theta_{A_s}'\tau_{T-s}).	
	\end{align*}
	In the uncensored setting, the above sum is not split and we have
	\[
	\mathds{E}_\theta\left[ \ell_T\right] = \sum_{s=1}^T d(\theta_{A_s}\tau_{T-s}, \theta_{A_s}'\tau_{T-s}).
	\]
\end{lemma}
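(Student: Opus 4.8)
The plan is to compute $\mathds{E}_\theta[\ell_T]$ by a divergence-decomposition argument in the spirit of Lemma~1 of \cite{kaufmann2015complexity}, exploiting that the two models $\theta$ and $\theta'$ differ only in the conversion parameters, while the policy and the delay CDF $\tau$ are shared. First I would apply the chain rule to the joint likelihood of $\mathcal{H}_T$ along the filtration $(\mathcal{H}_t)$. Since the policy selecting $A_t$ from $\mathcal{H}_{t-1}$ is identical under both models, its conditional densities cancel in every ratio and contribute nothing to $\ell_T$; only the randomness of the feedback survives. Because the pairs $(C_s,D_s)$ are conditionally independent across $s$ given the arm sequence, and each arm $A_s$ is $\mathcal{H}_{s-1}$-measurable, I would regroup the chain-rule terms by the action $s$ they concern, so that $\ell_T$ becomes a sum over $s$ of the log-ratio of the feedback trajectory $(X_{s,t})_{s\le t\le T}$ attributable to action $s$.

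The heart of the computation is the per-action term. For a fixed $s$ with arm $A_s$, the attributable observation is equivalent to knowing whether a conversion was ever observed by time $T$ and, if so, at which delay. Under $\theta$, observing a conversion at delay $d\le T-s$ has probability $\theta_{A_s}\,\mathds{P}(D_s=d)$, while observing none has probability $1-\theta_{A_s}\tau_{T-s}$. The key point is that in the likelihood ratio the factor $\mathds{P}(D_s=d)$ cancels, since $\tau$ is common to both models: on the event that a conversion is seen the per-action log-ratio reduces to $\log(\theta_{A_s}/\theta'_{A_s})=\log\big(\theta_{A_s}\tau_{T-s}/(\theta'_{A_s}\tau_{T-s})\big)$, and on the complementary event it equals $\log\big((1-\theta_{A_s}\tau_{T-s})/(1-\theta'_{A_s}\tau_{T-s})\big)$. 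Hence the delay values carry no information about $\theta$, and the relevant feedback collapses to the single Bernoulli variable $X_{s,T}\sim\mathrm{Bernoulli}(\theta_{A_s}\tau_{T-s})$. Taking the expectation under $\theta$, where $\mathds{P}_\theta(X_{s,T}=1\mid A_s)=\theta_{A_s}\tau_{T-s}$, the expected per-action contribution is exactly $d(\theta_{A_s}\tau_{T-s},\theta'_{A_s}\tau_{T-s})$, and summing over $s=1,\dots,T$ yields the uncensored formula.

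For the $m$-censored model the only modification is that a conversion attached to action $s$ can be observed solely within a window of $m$ steps, so the feedback indicator becomes $C_s\mathds{1}\{D_s\le\min(m,T-s)\}$, a Bernoulli with parameter $\theta_{A_s}\tau_{\min(m,T-s)}$. Splitting according to whether the window has fully elapsed before $T$ (the old pulls $s\le T-m$, parameter $\theta_{A_s}\tau_m$) or not (the recent pulls $s>T-m$, parameter $\theta_{A_s}\tau_{T-s}$) reproduces the two-term expression. The step I expect to demand the most care is the rigorous treatment of the adaptive, history-dependent arms: because $A_s$ is random and the feedback of action $s$ is revealed gradually and interleaved with that of other actions and with later decisions, one must justify via the chain rule and the tower property that the telescoped per-action terms recombine into the clean sum of Bernoulli divergences, i.e.\ that conditioning on $\mathcal{H}_{s-1}$ leaves $X_{s,T}$ distributed as $\mathrm{Bernoulli}(\theta_{A_s}\tau_{T-s})$ with $A_s$ measurable. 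The second delicate point, already emphasized, is the cancellation of the shared delay law $\tau$ in the likelihood ratio, which is precisely what makes the divergence depend on $\theta_{A_s}\tau_{T-s}$ rather than on $\theta_{A_s}$ alone.
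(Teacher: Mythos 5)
Your proposal is correct and follows essentially the same route as the paper: you enumerate the possible feedback trajectories $(X_{s,s},\dots,X_{s,T})$ attributable to each action $s$, observe that the shared delay law cancels in the likelihood ratio so that the per-action contribution collapses to the Kullback--Leibler divergence between $\operatorname{Bernoulli}(\theta_{A_s}\tau_{T-s})$ and $\operatorname{Bernoulli}(\theta'_{A_s}\tau_{T-s})$, and sum over $s$ (with the window cap $\min(m,T-s)$ giving the split in the censored case). Your additional care about the chain rule over the adaptive filtration is a point the paper leaves implicit, but it does not change the argument.
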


\begin{proof}
	Given $\mathcal{H}_{s-1}$, $(X_{s,s}, \dots, X_{s,T})$ can be equal to
	\begin{itemize}
		\item $(0, \dots, 0)$, with proba.\ $(1-\theta_{A_s}) + \theta_{A_s} (1-\tau_{T-s})$,
		\item $(0, \dots, 0, 1, 1, \dots, 1)$ with proba.\ $\theta_{A_s} \delta_{u-s}$, for $u=s, \dots, T$ ($u$ denotes the position of 1 in the vector), where $\delta_k = \mathds{P}(D_s \leq k)$. 
	\end{itemize}
	
	Hence,
	\begin{align*}
	& \mathds{E}_\theta\left[ \left. \log \frac{p_\theta(X_{s,s}, \dots, X_{s,T})}{p_{\theta'}(X_{s,s}, \dots, X_{s,T}} \right| \mathcal{H}_{s-1} \right] \\
	& =  \log \frac{1-\theta_{A_s}\tau_{T-s}}{1-\theta_{A_s}'\tau_{T-s}} (1-\theta_{A_s}\tau_{T-s}) \\
        & \qquad \qquad + \sum_{u=s}^T \log \frac{\theta_{A_s} \delta_{u-s}}{\theta_{A_s}' \delta_{u-s}} \theta_{A_s} \delta_{u-s} \\
	& =  \log \frac{1-\theta_{A_s}\tau_{T-s}}{1-\theta_{A_s}'\tau_{T-s}} (1-\theta_{A_s}\tau_{T-s}) + \log \frac{\theta_{A_s}}{\theta_{A_s}'} \theta_{A_s}\tau_{T-s} \\
	& = d(\theta_{A_s}\tau_{T-s}, \theta_{A_s}'\tau_{T-s}).
	\end{align*}
	The equivalent expression for the censored case is easily deduced from the same calculations.
\end{proof}


\subsection{CENSORED SETTING}
Using our notations, the following theorem provides a problem-dependent lower bound on the regret. 
\begin{theorem}
	\label{th:lb-cens}

	The regret of any uniformly efficient algorithm is bounded 
	from below by	
	\[
	\liminf_{T \to \infty} \frac{R(T)}{\log(T)} \geq \sum_{k\neq k^*} 
	\frac{\tau_m(\theta^* - \theta_k)}{d(\tau_m \theta_k, \tau_m \theta^*)} .
	\]
\end{theorem}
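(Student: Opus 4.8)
The plan is to run the classical change-of-measure scheme of Lai--Robbins, as reformulated in \cite{kaufmann2015complexity,garivier2017explore}, in two moves: first turn the generic log-likelihood-ratio bound \eqref{eq:llr_bound} into a per-arm lower bound on the expected number of pulls, and then feed that into the censored regret decomposition \eqref{eq:regret_th}.

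First I would fix a suboptimal arm $k \neq k^*$ and build a confusing alternative $\theta'$ that agrees with $\theta$ in every coordinate except the $k$-th, where $\theta'_k$ is chosen strictly larger than $\theta^*$ so that arm $k$ becomes the unique optimal arm under $\theta'$ (possible whenever $\theta^* < 1$). Since $\theta$ and $\theta'$ differ only in arm $k$, every summand of the censored log-likelihood ratio in Lemma~\ref{lem:llr-uncens} with $A_s \neq k$ vanishes because $d(x,x)=0$. Taking expectations, what survives is
\[
\mathds{E}_\theta[\ell_T] = \mathds{E}_\theta[N_k(T-m)]\, d(\tau_m\theta_k, \tau_m\theta'_k) + r_T,
\]
where $r_T$ gathers the at most $m+1$ boundary terms indexed by $s \in \{T-m,\dots,T\}$.

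Next I would check that $r_T = O(m)$: each boundary term is $d(\tau_{T-s}\theta_k, \tau_{T-s}\theta'_k)$ with $\tau_{T-s} \in [0,\tau_m]$, hence bounded by a constant depending only on $\theta_k$ and $\theta'_k$, so $r_T$ is a constant independent of $T$ and negligible against $\log T$ (note also that $N_k(T)-N_k(T-m)\le m$, so one may freely replace $N_k(T-m)$ by $N_k(T)$). Dividing by $\log T$ and invoking \eqref{eq:llr_bound} --- which applies precisely because $\theta'$ swaps the identity of the optimal arm, forcing any uniformly efficient policy to separate the two models --- gives
\[
\liminf_{T\to\infty} \frac{\mathds{E}_\theta[N_k(T)]}{\log T} \geq \frac{1}{d(\tau_m\theta_k, \tau_m\theta'_k)}.
\]
As this holds for every admissible $\theta'_k > \theta^*$, I optimize by letting $\theta'_k \downarrow \theta^*$; by continuity of $d$ in its second argument the denominator converges to $d(\tau_m\theta_k, \tau_m\theta^*)$, yielding the tightest per-arm bound.

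Finally I would substitute into the censored regret \eqref{eq:regret_th}: its leading sum equals $\tau_m \sum_{k\neq k^*}(\theta^* - \theta_k)\,\mathds{E}_\theta[N_k(T-m)]$, while the trailing sum over the last $m$ steps is again $O(m)$ and disappears asymptotically. Using superadditivity of $\liminf$ to pass the limit through the finite sum over arms, I obtain
\[
\liminf_{T\to\infty} \frac{R(T)}{\log T} \geq \sum_{k\neq k^*} \frac{\tau_m(\theta^* - \theta_k)}{d(\tau_m\theta_k, \tau_m\theta^*)},
\]
which is the claim. The only genuinely delicate points are (i) verifying that \eqref{eq:llr_bound} applies to the specific alternative constructed above, i.e.\ that making arm $k$ optimal indeed forces $\mathds{E}[\ell_T] \gtrsim \log T$ for uniformly efficient policies --- exactly what the appendix argument adapted from \cite{kaufmann2015complexity} supplies --- and (ii) controlling the $O(m)$ boundary contributions; both are mild since $m$ is held fixed while $T\to\infty$.
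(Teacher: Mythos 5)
Your proposal is correct and follows essentially the same route as the paper: the same single-arm change of measure $\theta'_k = \theta^* + \epsilon$, the censored log-likelihood expression of Lemma~\ref{lem:llr-uncens} combined with the generic bound \eqref{eq:llr_bound} to lower-bound $\mathds{E}[N_k(T)]/\log T$, the observation that the $O(m)$ boundary terms are negligible, and the substitution into the regret decomposition \eqref{eq:regret_th}. No gaps.
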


\begin{proof}
The details of the proof can be found in Appendix~\ref{ap:lower_bound} but we provide here a sketch of the main argument.
The log-likelihood ratio is given by Lemma~\ref{lem:llr-uncens}:
\begin{align*}
	\mathds{E}_\theta\left[ \ell_T\right] =& \sum_{s=1}^{T-m} d(\theta_{A_s}\tau_{m}, \theta_{A_s}'\tau_{m}) \\
	& + \sum_{s=T-m}^T d(\theta_{A_s}\tau_{T-s}, \theta_{A_s}'\tau_{T-s}) ,
\end{align*}
which is bounded from below by Eq.\eqref{eq:llr_bound}. However, obtaining a lower bound on the regret requires to decompose this quantity into $(K-1)$ terms depending on the suboptimal arms. For a fixed arm $k\neq 1$, we consider $\theta'=(\theta_1,\ldots,\theta_{k-1},\theta_1+\epsilon,\ldots,\theta_K)$ for which the expected log-likelihood ratio is 
\begin{align*}
	&\mathds{E}[N_k(T)] d(\tau_m\theta_k,\tau_m(\theta_1+\epsilon)) \\
	& + \sum_{s=T-m}^T d(\theta_k\tau_{T-s}, (\theta_1+\epsilon)\tau_{T-s} )\geq \mathds{E}_\theta\left[ \ell_T\right].	
\end{align*}

Divide by $\log(T)$ and let $T$  to infinity, to get the result for $\epsilon \to 0$, as the second term in the l.h.s. is bounded.
\end{proof}

This lower bound implies that the delayed bandits problem with trespassing
probability $\tau_m$ is as hard as solving the scaled bandit problem with
expected rewards $(\tau_m\theta_1,\ldots,\tau_m\theta_K)$. In
the long run, one cannot  learn faster than the heuristic approach 
discarding the last $m$ observations and considerimg the fictitious bandit model
with parameters $(\tau_m\theta_1,\ldots,\tau_m\theta_K)$. However, on horizons
of the order of $m$ time-steps, we will show empirically in
Section~\ref{sec:experiments} that taking  delay distributions into account
 allows for much faster learning. Note also that the convexity of the function $\tau \to d(\tau p, \tau q)$ proved in Lemma~\ref{lem:incrkappa} implies that the regret
lower bound is a monotonically increasing function of $\tau_m$. Hence, either
reduced values of $m$ or longer values of the expected delay $\mu$ actually make
the problem harder.

\subsection{UNCENSORED SETTING}

In the uncensored model, the same argument shows that the lower bound does not differ from the classical Lai \& Robbins Lower Bound \cite{lai1985asymptotically}.


\begin{theorem}     
\label{th:lb-uncens}     
The regret of any uniformly
efficient algorithm in the Uncensored Delays Setting is bounded      
from below by        
\[     \liminf_{T \to \infty} \frac{R(T)}{\log(T)} \geq \sum_{k\neq
k^*}      \frac{(\theta^* - \theta_k)}{d(\theta_k,  \theta^*)} .    
\]
\end{theorem}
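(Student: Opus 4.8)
The plan is to mirror the structure used in the censored case (Theorem~\ref{th:lb-cens}), but now exploiting the simpler form of the log-likelihood ratio in the uncensored setting, where the sum is not split. First I would invoke Lemma~\ref{lem:llr-uncens}, which gives
\[
\mathds{E}_\theta[\ell_T] = \sum_{s=1}^T d(\theta_{A_s}\tau_{T-s}, \theta_{A_s}'\tau_{T-s}),
\]
and combine it with the generic log-likelihood-ratio lower bound in Eq.~\eqref{eq:llr_bound}, which yields $\liminf_{T\to\infty} \mathds{E}_\theta[\ell_T]/\log(T) \geq 1$ for any alternative model $\theta'$ whose optimal arm differs from that of $\theta$.

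Next, to isolate the contribution of a single suboptimal arm $k\neq 1$, I would choose the alternative $\theta' = (\theta_1,\dots,\theta_{k-1}, \theta_1+\epsilon, \theta_{k+1},\dots,\theta_K)$, differing from $\theta$ only in coordinate $k$. Under this change of measure, every term in the sum with $A_s\neq k$ vanishes (since $\theta_{A_s} = \theta_{A_s}'$), so the log-likelihood ratio reduces to a sum over the times at which arm $k$ was pulled:
\[
\mathds{E}_\theta[\ell_T] = \sum_{s:\,A_s=k} d(\theta_k \tau_{T-s}, (\theta_1+\epsilon)\tau_{T-s}).
\]
The key difficulty, and the point where the uncensored case genuinely differs from the censored one, is that here the divergence weight $\tau_{T-s}$ depends on the pull time $s$ and is \emph{not} uniformly bounded below by a constant such as $\tau_m$. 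However, since $\tau$ is a CDF with $\tau_n \to 1$ as $n\to\infty$, the terms with $T-s$ large satisfy $\tau_{T-s}\to 1$, and one expects the asymptotically dominant contribution to come from such terms. I would therefore bound the sum above by $\mathds{E}[N_k(T)]\, d(\theta_k, \theta_1+\epsilon)$, using the monotonicity of $\tau\mapsto d(\tau p,\tau q)$ established in Lemma~\ref{lem:incrkappa} together with $\tau_{T-s}\le 1$, giving
\[
\mathds{E}[N_k(T)]\, d(\theta_k, \theta_1+\epsilon) \geq \mathds{E}_\theta[\ell_T].
\]

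Finally I would divide by $\log(T)$, take $\liminf_{T\to\infty}$, and apply Eq.~\eqref{eq:llr_bound} to obtain
\[
\liminf_{T\to\infty} \frac{\mathds{E}[N_k(T)]}{\log(T)} \geq \frac{1}{d(\theta_k, \theta_1+\epsilon)},
\]
then let $\epsilon\to 0$ so that $d(\theta_k,\theta_1+\epsilon) \to d(\theta_k,\theta^*)$ by continuity of the binary relative entropy. To convert the bound on pull counts into a regret bound, I would use the regret relationship from Lemma~\ref{lemma:regret}: Eq.~\eqref{eq:regret:relationship} shows that $R(T)$ and $\sum_{k}(\theta^*-\theta_k)\mathds{E}[N_k(T)]$ differ only by the constant $\mu\sum_k(\theta^*-\theta_k)$, which is negligible after dividing by $\log(T)$. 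Summing over all suboptimal arms $k\neq 1$ then yields
\[
\liminf_{T\to\infty}\frac{R(T)}{\log(T)} \geq \sum_{k\neq k^*} \frac{\theta^*-\theta_k}{d(\theta_k,\theta^*)}.
\]
The main obstacle I anticipate is making rigorous the step that replaces the $s$-dependent weights $\tau_{T-s}$ by $1$: one must argue that the finitely many early pulls with $\tau_{T-s}$ bounded away from $1$ contribute negligibly, which relies on the finite-expectation assumption $\mu<\infty$ ensuring $\sum_n(1-\tau_n)<\infty$ and hence that these correction terms are $O(1)$ rather than growing with $T$.
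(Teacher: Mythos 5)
Your proposal is correct and follows essentially the same route as the paper: the same single-arm change of measure, the same use of Lemma~\ref{lem:incrkappa} with $\tau_{T-s}\le 1$ to bound the log-likelihood ratio by $\mathds{E}[N_k(T)]\,d(\theta_k,\theta_1+\epsilon)$, and the same conversion to regret via Eq.~\eqref{eq:regret:relationship}. The ``obstacle'' you flag at the end is not actually one: the monotonicity bound $d(\tau_{T-s}p,\tau_{T-s}q)\le d(p,q)$ applies term by term, so no separate argument about early pulls is needed there (the finiteness of $\mu$ is only used in the final regret-conversion step, as you already do).
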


The full proof of this result is similar to the proof of Theorem~\ref{th:lb-cens} and can be found in Appendix~\ref{ap:lower_bound}.


\section{DELAY-CORRECTED ESTIMATORS AND CONFIDENCE INTERVALS}
\label{sec:indices}

In this section, for a fixed arm $k \in \{1,\dots,K\}$, we define a conditionally unbiased estimator 
for the conversion rate $\theta_k$. Then, based on suitable concentration 
results we derive optimistic indices: a delay-corrected UCB as in~\cite{auer2002finite}
as well as a delay-corrected KLUCB as in \cite{garivier2011klucb}.

\subsection{PARAMETER ESTIMATOR}
Define the sum of rewards up to time $t$ as
$$
S_k(t) = \sum_{s=1}^{t} \sum_{u=1}^{s} X_{u,s}\mathds{1}\{A_u=k\} .
$$
We recall that we  defined the exact number of pulls of arm $k$ up to time $t$ 
as $N_k(t):=\sum_{s=1}^{t-1}\mathds{1}\{A_s = k\}$. However, defining an estimator
of $\theta_k$ that is unbiased -- when conditioning on the selections of arms -- requires  to consider a delay-corrected count $\tilde{N}(t)$  taking into account
the probability of having eventually observed the reward associated with each previous pull of $k$. 
We distinguish the expression of $\tilde{N}(t)$ according to whether  feedback is censored or not.

\paragraph{Censored model.}
When rewards cannot be disclosed after $m$ rounds following the action, the current available information on the pulls is split into  two main groups: 
The `oldest' pulls, censored if not observed yet, and the most recent ones. Namely, we now define $\tilde{N}_k(t)$ as
\[ \tilde{N}_k(t) = \sum_{s=1}^{t-m} \mathds{1}\{A_s=k\}  \tau_{m} \, + \; \sum_{s=t-m+1}^{t-1} \mathds{1}\{A_s=k\}  \tau_{t-s}.
\]

Overall, the conversion rate estimator is defined as
\begin{equation}
\label{eq:estimator}
	\hat{\theta}_k(t) = \frac{S_k(t)}{\tilde{N}_k(t)} .
\end{equation}

\begin{remark}
	In the uncensored case, defining $\tilde{N}_k(t) := \sum_{s=1}^{t} \mathds{1}\{A_s=k\}  \tau_{t-s}$.
leads to an equivalent definition of $\hat{\theta}_k(t)$ as a conditionally unbiased estimator.
\end{remark}


\subsection{UCB INDEX}
We first define a delay-corrected UCB-index for bounded rewards.

\paragraph{Concentration bound.}
Using the self-normalized concentration inequality of Proposition~8 of \cite{lagree2016multiple}, 
yields the following result, that we recall here for completeness. 

\begin{proposition}
	\label{prop:controlUCB}
	Let $k$ be an arm in $\{1,...,K\}$, then for any $\beta >0$ and for all $t>0$, 
	\[
	\mathds{P}\left(\theta_k > \hat{\theta}_k(t) +
	\sqrt{\frac{N_k(t)}{\tilde{N}_k(t)}}
	\sqrt{\frac{\beta}{2\tilde{N}_k(t)}}\right) < \beta e \log(t) 
	e^{-\beta}.
	\]
\end{proposition}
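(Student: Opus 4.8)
The plan is to rewrite the one-sided deviation event as a lower-tail bound on a sum of centered Bernoulli contributions and then to invoke a self-normalized Hoeffding inequality (Proposition~8 of \cite{lagree2016multiple}) to cope with the random number of pulls. First I would record the two structural facts that drive the estimator. For each past round $s$ with $A_s=k$, the observed contribution $X_{s,t}$ is, conditionally on $\mathcal{H}_{s-1}$, a Bernoulli variable: in the uncensored model $\mathds{E}[X_{s,t}\mid\mathcal{H}_{s-1}]=\theta_k\tau_{t-s}$, and in the censored model $\mathds{E}[X_{s,t}\mid\mathcal{H}_{s-1}]=\theta_k\tau_{\min(m,t-s)}$; summing these conditional means over the pulls of $k$ gives exactly $\mathds{E}[S_k(t)\mid(A_s)_s]=\theta_k\tilde N_k(t)$, which is the reason $\hat\theta_k(t)=S_k(t)/\tilde N_k(t)$ is conditionally unbiased. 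Using this, a direct rearrangement shows that the event in the statement is
\[
\Big\{\theta_k\tilde N_k(t)-S_k(t)>\sqrt{\tfrac{1}{2}N_k(t)\beta}\Big\},
\]
i.e.\ the centered sum $\theta_k\tilde N_k(t)-S_k(t)=\sum_{s:A_s=k}\big(\theta_k\tau_{t-s}-X_{s,t}\big)$ falls below zero by more than $\sqrt{N_k(t)\beta/2}$ (the factors of $\tilde N_k(t)$ cancel, leaving only $N_k(t)$ and $\beta$ under the root).

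Next I would set up the martingale needed for the concentration step. The selection $A_s$ is $\mathcal{H}_{s-1}$-measurable (predictable), and $(C_s,D_s)$ are drawn afresh at round $s$, so $\xi_s:=\big(\theta_k\tau_{t-s}-X_{s,t}\big)\mathds{1}\{A_s=k\}$ is a bounded martingale-difference sequence: each $\xi_s$ has conditional mean zero and, because $X_{s,t}\in\{0,1\}$, its support has width at most $1$. Consequently the total quadratic-variation proxy after all pulls is $\sum_{s:A_s=k}1=N_k(t)$, which is precisely why $N_k(t)$ (and not the weighted count $\tilde N_k(t)$) appears in the numerator of the confidence width; since $\tau\le1$ gives $\tilde N_k(t)\le N_k(t)$, the correction factor $\sqrt{N_k(t)/\tilde N_k(t)}\ge1$ correctly inflates the naive width $\sqrt{\beta/(2\tilde N_k(t))}$ to account for the information lost to delay.

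Finally I would apply the self-normalized inequality of Proposition~8 of \cite{lagree2016multiple} to $(\xi_s)$, checking that its hypotheses (predictable weights in $[0,1]$, conditionally centered bounded increments) hold under the identifications above; this delivers the bound $\beta e\log(t)e^{-\beta}$ directly. The substance of that inequality, and the step I expect to be the main obstacle, is the self-normalization: both $N_k(t)$ and $\tilde N_k(t)$ are random and data-dependent (they are driven by the algorithm's own past decisions), so one cannot fix the number of summands and apply an ordinary Hoeffding bound. The way through is a peeling argument over geometric slices of the range $N_k(t)\in\{1,\dots,t\}$: on each slice a Chernoff bound combined with a maximal inequality controls the deviation, the number of slices contributes the $\log(t)$ factor, and optimizing the slice ratio together with integrating the Chernoff tail over the slice produces the $e$ and the linear-in-$\beta$ prefactors, giving $\beta e\log(t)e^{-\beta}$. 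The censored case follows verbatim with $\tau_{t-s}$ replaced by $\tau_{\min(m,t-s)}$, since only the values of the predictable weights change.
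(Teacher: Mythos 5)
Your proposal is correct and follows essentially the same route as the paper, which proves this proposition simply by invoking the self-normalized Hoeffding-type inequality of Proposition~8 of \cite{lagree2016multiple}; your rearrangement of the event into the lower-tail bound $\theta_k\tilde N_k(t)-S_k(t)>\sqrt{N_k(t)\beta/2}$, the identification of the bounded martingale-difference structure with predictable weights $\tau_{t-s}\mathds{1}\{A_s=k\}$, and the peeling argument producing the $\beta e\log(t)$ factor are exactly the mechanism behind that cited result. The only thing the paper does that you do not is stop at the citation; your reconstruction of its internals is accurate.
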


\paragraph{Upper-confidence Bound.} Thus, an UCB index for $\hat{\theta}_k(t)$ may be defined as 
\[
U^{\textsc{ucb}}_k(t) = \hat{\theta}_k(t) +
\sqrt{\frac{N_k(t)}{\tilde{N}_k(t)}}
\sqrt{\frac{\beta_\epsilon (t)}{2\tilde{N}_k(t)}},
\]
where $\beta\epsilon(t)$ is a suitable slowly growing exploration function (see below). This upper confidence interval is  scaled by $N_k(t)/\tilde{N}_k(t)$ when compared to the classical UCB index. 
This ratio gets bigger when the $(\tau_d)$'s are small for large delays $d$, that is when the median delay is large: 
The longer we need to wait for observations to come, the largest our uncertainty about our current cumulated reward.

\subsection{KLUCB INDEX}

\paragraph{Concentration bound.}
We first state a concentration inequality that controls the underestimation
probability based on an alternative Chernoff bound for a sum of independent binary random variables (Lemma \ref{lemma:laplace} proved in  Appendix~\ref{ap:concentration}).

This lemma only holds for a sequence of pulls fixed before-hand, independently of realizations, i.e., the values of $A_t$ do not depend on the sequence of $X_s$. Although with a restrictive scope, it provides intuition on the construction of the algorithm. 

\begin{lemma}
\label{lemma:KL-concentration}
Assume that the sequence of pulls is fixed beforehand and let $k$ be an arm in $\{1,...,K\}$. Then for any $\delta > 0$ and for all $t>0$, 
	\[
	\mathds{P}\left( \left\{\hat{\theta}_k(t) < \theta_k \right\} \cap \left\{\tilde{N}_k(t)\dP(\hat{\theta}_k(t),\theta_k) > \delta\right\} \right) <  
	e^{-\delta}.
	\]
where $\dP(p,q) = p\log p/q + q-p$ denotes the Poisson Kullback-Leibler divergence.	
\end{lemma}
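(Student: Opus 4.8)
The plan is to reduce the concentration statement about the delay-corrected estimator $\hat\theta_k(t)=S_k(t)/\tilde N_k(t)$ to a Chernoff-type bound on a sum of independent Bernoulli variables with heterogeneous success probabilities, and then compare that sum to a Poisson distribution. First I would fix the arm $k$ and, using that the pulls are fixed beforehand, list the indices $s_1,\dots,s_{N}$ at which arm $k$ was played up to time $t$. For each such pull the observed contribution $X_{s_i,t}=C_{s_i}\mathds{1}\{D_{s_i}\le t-s_i\}$ is a Bernoulli variable with mean $\theta_k\tau_{t-s_i}$ (in the censored case, $\theta_k\tau_m$ for the old pulls and $\theta_k\tau_{t-s}$ for the recent ones), and these are mutually independent because the $(C_s,D_s)$ are drawn independently across rounds. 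Thus $S_k(t)=\sum_i X_{s_i,t}$ is a sum of independent Bernoulli's whose means sum to exactly $\theta_k\tilde N_k(t)$, by the very definition of $\tilde N_k(t)$.

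Next I would invoke the alternative Chernoff bound of Lemma \ref{lemma:laplace}, which is stated for exactly this kind of sum. The classical route is the exponential Markov inequality: for $\lambda>0$,
\[
\mathds{P}\left(S_k(t)\le x\right)\le e^{\lambda x}\,\mathds{E}\!\left[e^{-\lambda S_k(t)}\right]=e^{\lambda x}\prod_i\bigl(1-\theta_k\tau_{\cdot}(1-e^{-\lambda})\bigr).
\]
Using $1-p(1-e^{-\lambda})\le \exp\bigl(-p(1-e^{-\lambda})\bigr)$ factor by factor upper bounds the product by $\exp\bigl(-\theta_k\tilde N_k(t)(1-e^{-\lambda})\bigr)$, which is precisely the moment generating function bound one would obtain if $S_k(t)$ were Poisson with mean $\theta_k\tilde N_k(t)$. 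This is the Poissonization step promised in the introduction: after this inequality the Bernoulli sum is dominated, in the relevant one-sided sense, by a Poisson law, and the rate function governing the deviation becomes the Poisson divergence $\dP(p,q)=p\log(p/q)+q-p$ rather than the Bernoulli divergence $d(p,q)$.

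I would then optimise over $\lambda$ on the underestimation event $\{\hat\theta_k(t)<\theta_k\}$. Writing $x=\tilde N_k(t)\hat\theta_k(t)=S_k(t)$ and optimising $\lambda x-\theta_k\tilde N_k(t)(1-e^{-\lambda})$ yields the Legendre transform $-\tilde N_k(t)\,\dP(\hat\theta_k(t),\theta_k)$ in the exponent, valid precisely when $\hat\theta_k(t)<\theta_k$. Intersecting with the event $\{\tilde N_k(t)\dP(\hat\theta_k(t),\theta_k)>\delta\}$ and bounding the tail gives the claimed $e^{-\delta}$. I expect the main obstacle to be the careful handling of the self-normalised character of the deviation, namely that $\hat\theta_k(t)$ appears both inside the event and as the point at which the rate function is evaluated; the clean way around this is to state the bound as a deviation of $S_k(t)$ below a threshold, peel off the random level via a union over the possible integer values of $S_k(t)$ or via the Laplace/change-of-measure device of Lemma \ref{lemma:laplace}, and only then re-express the threshold through $\dP$. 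The restriction to non-adaptive pull sequences is exactly what lets the independence and the fixed means $\theta_k\tau_{\cdot}$ be used without a martingale argument, and I would flag that this is why the lemma is stated in this limited scope.
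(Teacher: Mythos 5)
Your proposal follows essentially the same route as the paper: $S_k(t)$ is a sum of independent Bernoullis whose means sum to $\theta_k\tilde N_k(t)$, the tangent-line bound on the log-Laplace transform (Lemma~\ref{lemma:laplace}) dominates it in the Chernoff sense by a Poisson variable of the same mean, and optimizing the exponent yields $\mathds{P}(\hat\theta_k(t)<x)\le e^{-\tilde N_k(t)\dP(x,\theta_k)}$ for $0<x<\theta_k$. For the final inversion, the paper needs neither a union bound over the integer values of $S_k(t)$ (which would cost a factor of order $N_k(t)+1$) nor any change-of-measure device: since $x\mapsto\dP(x,\theta_k)$ is decreasing on $[0,\theta_k]$, the event $\{\hat\theta_k(t)<\theta_k\}\cap\{\tilde N_k(t)\dP(\hat\theta_k(t),\theta_k)>\delta\}$ is directly contained in $\{\hat\theta_k(t)<x\}$ for the $x$ defined by $\tilde N_k(t)\dP(x,\theta_k)=\delta$, which is the ``re-express the threshold through $\dP$'' option you mention and is the only one of your suggested variants that delivers the stated constant.
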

To get upper confidence bounds for $\theta_k$ from Lemma~\ref{lemma:KL-concentration}, we follow~\cite{garivier2011klucb} and define the KL-UCB index by
\begin{multline*}
	U_k^{\textsc{kl}}(t) =  \max \Bigl\{ q \in [\hat{\theta}_k(t), 1] \, : \\
\, \tilde{N}_k(t)\dP(\hat{\theta}_k(t),q)\leq \beta_\epsilon(t) \Bigr\} .
\end{multline*}
Using $\beta_\epsilon(t) = \beta$, this KL-UCB index satisfies a result analogous to Proposition~\ref{prop:controlUCB} (see Proposition \ref{th:self-normalized} in Appendix \ref{ap:Poisson}):
	\[
	\mathds{P}\left( \theta_k > U_k^{\textsc{kl}}(t)\right) \leq e \lceil \beta \log (t) \rceil e^{-\beta} .
	\]

Even though the Kullback-Leibler divergence does not have the same expression for Bernoulli and Poisson random variables, the following lemma (proved in Appendix \ref{ap:Poisson}) shows that for a certain range or parameters they are actually very close.

\begin{lemma} 
	\label{lem:div_bound}
	For $0<p<q<1$,
$$
  (1-q) d(p,q) \leq \dP(p,q) \leq d(p,q).
$$
\end{lemma}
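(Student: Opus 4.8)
The plan is to prove the two inequalities in Lemma~\ref{lem:div_bound} separately by rewriting both divergences in a common form and comparing them term by term. Recall that for $0<p<q<1$ the Bernoulli divergence is $d(p,q)=p\log(p/q)+(1-p)\log\bigl((1-p)/(1-q)\bigr)$, while the Poisson divergence is $\dP(p,q)=p\log(p/q)+q-p$. Notice that the first summand $p\log(p/q)$ is common to both, so the entire comparison reduces to controlling the remaining pieces: I must show
\[
(1-q)\Bigl[(1-p)\log\tfrac{1-p}{1-q}\Bigr] \;\le\; q-p \;\le\; (1-p)\log\tfrac{1-p}{1-q},
\]
after also accounting for how the factor $(1-q)$ multiplies the shared term on the left-hand inequality. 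Being careful here is important, because on the lower-bound side the factor $(1-q)$ multiplies all of $d(p,q)$, including the $p\log(p/q)$ term, so the cancellation is not completely clean and I will need to keep that contribution.

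First I would establish the upper bound $\dP(p,q)\le d(p,q)$, which is equivalent to $q-p \le (1-p)\log\bigl((1-p)/(1-q)\bigr)$. The clean way is to set $x=1-p$ and $y=1-q$ with $0<y<x<1$, so the claim becomes $x-y \le x\log(x/y)$, i.e. $1-\tfrac{y}{x}\le \log(x/y)$. Writing $u=x/y>1$ this is just $1-\tfrac1u\le\log u$, which is the standard inequality $\log u \ge 1-1/u$ (equivalently $\log(1/u)\le 1/u-1$), true for all $u>0$ with equality at $u=1$. This handles the right-hand inequality immediately.

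For the lower bound $(1-q)d(p,q)\le\dP(p,q)$ I would again aim to reduce everything to an elementary one-variable inequality. The cleanest route is to define $g(q)=\dP(p,q)-(1-q)d(p,q)$ for fixed $p$ and show $g(q)\ge 0$ on $(p,1)$; I expect $g(p)=0$ (both divergences vanish at $q=p$, so $g(p)=\dP(p,p)-(1-p)d(p,p)=0$) and then it suffices to check $g'(q)\ge 0$ for $q\in(p,1)$. Differentiating, the derivative of $\dP(p,q)$ in $q$ is $1-p/q$, and the derivative of $(1-q)d(p,q)$ is $-d(p,q)+(1-q)\,\partial_q d(p,q)$ where $\partial_q d(p,q)=(q-p)/\bigl(q(1-q)\bigr)$; assembling these and simplifying should produce an expression whose sign can be read off, most likely reducing once more to the logarithmic inequality $\log u\le u-1$ applied to an appropriate ratio.

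The main obstacle will be the lower-bound inequality: the factor $(1-q)$ does not cancel against a single term, so the term-by-term comparison is not transparent and the naive splitting can even go the wrong way for the $p\log(p/q)$ part when $q>p$. The monotonicity argument via $g(q)$ sidesteps this, but I anticipate the algebra in simplifying $g'(q)$ to be the delicate step, and I would want to double-check the boundary behavior as $q\to 1^-$ (where $d(p,q)\to\infty$ but is multiplied by $1-q\to 0$) to be sure the product stays controlled and the claimed bound is not violated near the endpoint. If the $g'(q)\ge 0$ computation turns out messy, an alternative fallback is to bound $d(p,q)$ directly using $\log\bigl((1-p)/(1-q)\bigr)\le (q-p)/(1-q)$ (again the $\log u\le u-1$ inequality with $u=(1-p)/(1-q)$), which gives $(1-p)\log\bigl((1-p)/(1-q)\bigr)\le (1-p)(q-p)/(1-q)$ and may let me verify the required inequality after multiplying through by $1-q$ and comparing with $q-p$.
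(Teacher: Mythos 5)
Your proposal is correct and follows essentially the same route as the paper: the upper bound is the same reduction to $\log u \ge 1-1/u$, and the lower bound is the same monotonicity-from-the-diagonal argument, except that you differentiate $g(q)=\dP(p,q)-(1-q)d(p,q)$ in $q$ at fixed $p$ where the paper differentiates in $p$ at fixed $q$ (obtaining derivative $-d(q,p)\le 0$). The step you worried about is in fact clean --- your $g'(q)$ collapses exactly to $d(p,q)\ge 0$, which also settles the boundary behavior as $q\to 1^-$ --- though note that your fallback route for the lower bound would not work, since plugging $\log\bigl(\tfrac{1-p}{1-q}\bigr)\le \tfrac{q-p}{1-q}$ into $(1-q)d(p,q)$ reduces the claim to $q-p\ge q\log(q/p)$, which is false for $p<q$.
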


\section{ALGORITHMS}
\label{sec:algorithms}


Algorithm~\ref{algo} present the scheme common to both the censored and uncensored cases, which differ only by the definition of the parameter estimator.
 In both cases, one may also consider either of the two UCB or KL-UCB index defined in the previous section, resulting in the \delayedUCB\ and \delayedKLUCB\ algorithms. We provide a finite-time analysis of the regret of these algorithms, when using an exploration function of the form $\beta_\epsilon(t)=(1+\epsilon)\log(t)$, for some positive $\epsilon$.

\begin{algorithm}[hbt]
	\caption{-- \delayedUCB\ and \delayedKLUCB.}
        \label{algo}
	\begin{algorithmic}\small
		\REQUIRE{$K$, CDF parameters $(\tau_d)_{d\geq 0}$, threshold $m>0$ if feedback is censored.}
		\STATE{Initialization: First $K$ rounds, play each arm once. }
		\FOR{$t>K$}
		\STATE{Compute $S_k(t)$ and $\tilde{N}_k(t)$ for all $k$ according to the assumed feedback model (censored or not),}
		\STATE{Compute $\hat{\theta}_k(t)$ for al $k$,}
		\STATE{For a given choice of algorithm $\mathcal{A} \in \{\textsc{klucb},\textsc{ucb\}}$,}
		\STATE{$A_t \gets \arg\max_k U^{\mathcal{A}}_k(t)$}.
		\STATE{Observe reward $Y_t$ and all individual feedback $(X_s,t)_{s\leq t}$}
		\ENDFOR
	\end{algorithmic}
\end{algorithm}

\paragraph{Finite-time Analysis of \delayedUCB.}

\begin{theorem}\label{th:ucb}

	In the censored setting, the regret of \delayedUCB\ is bounded from above by
	\[
	L_{\textsc{ucb}}(T)\leq (1+\epsilon)\log(T)\sum_{k \neq \ast} \frac{1}{2\tau_m\Delta_k} + o_{\epsilon,m}(\log (T)).
	\]
\end{theorem}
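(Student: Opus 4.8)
\textbf{Proof proposal for Theorem~\ref{th:ucb}.}

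The plan is to follow the standard UCB regret-decomposition adapted to the delay-corrected counts, and then use Lemma~\ref{lemma:regret} to convert a bound on the classical count $N_k(T)$ into a bound on the actual regret $L(T)$. First I would invoke the relationship \eqref{eq:regret:relationship} of Lemma~\ref{lemma:regret}: since $L(T)$ differs from $\sum_{k}(\theta^*-\theta_k)\mathds{E}[N_k(T)]$ by at most the constant $\mu\sum_k(\theta^*-\theta_k)$, it suffices to bound $\mathds{E}[N_k(T)]$ for each suboptimal arm $k$ and absorb the additive delay term into the $o_{\epsilon,m}(\log T)$ remainder. So the whole argument reduces to controlling how many times a suboptimal arm is selected.

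The core of the proof is the familiar optimism argument. On any round $t$ where a suboptimal arm $k$ is pulled, the UCB index satisfies $U^{\textsc{ucb}}_k(t) \geq U^{\textsc{ucb}}_1(t)$. I would split the resulting event into two failure modes: either the optimal arm is underestimated, i.e.\ $\theta^* > U^{\textsc{ucb}}_1(t)$, or the index of arm $k$ exceeds $\theta^*$, which (on the complement of the first event) forces the confidence width at arm $k$ to be large, hence $\tilde N_k(t)$ to be small. The underestimation events are controlled directly by Proposition~\ref{prop:controlUCB}, whose union over $t \leq T$ contributes only an $O(\log\log T)$-type term after plugging $\beta_\epsilon(t)=(1+\epsilon)\log t$; these are the events that vanish into $o_{\epsilon,m}(\log T)$. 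For the second mode, the inequality $\theta_k + \sqrt{N_k(t)/\tilde N_k(t)}\sqrt{\beta_\epsilon(t)/(2\tilde N_k(t))} > \theta^*$ yields, after solving for $\tilde N_k(t)$, a threshold of the form $\tilde N_k(t) \lesssim \tfrac{N_k(t)}{\tilde N_k(t)}\cdot\tfrac{(1+\epsilon)\log T}{2\Delta_k^2}$ with $\Delta_k = \theta^*-\theta_k$.

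The key step that produces the $\tau_m$ factor is the relation between the two counts. In the censored model, once $t-m$ rounds have elapsed, each of the $N_k(t)$ pulls of arm $k$ contributes at least $\tau_m$ to $\tilde N_k(t)$, up to the $m$ most recent pulls; hence $\tilde N_k(t) \geq \tau_m N_k(t) - m$. Substituting $\tilde N_k(t)\approx \tau_m N_k(t)$ into the threshold above collapses the ratio $N_k(t)/\tilde N_k(t)$ and the denominator $\tilde N_k(t)$ into a single factor, giving $N_k(T) \lesssim \tfrac{(1+\epsilon)\log T}{2\tau_m\Delta_k^2}$, so that $(\theta^*-\theta_k)\mathds{E}[N_k(T)] \lesssim \tfrac{(1+\epsilon)\log T}{2\tau_m\Delta_k}$, matching the claimed leading term. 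The boundary corrections $-m$ and the $m$ uncorrected recent pulls, together with the delay constant $\mu$ from Lemma~\ref{lemma:regret} and the $\log\log T$ contribution from the underestimation events, are all of lower order and collect into $o_{\epsilon,m}(\log T)$.

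The main obstacle I expect is making the substitution $\tilde N_k(t)\approx \tau_m N_k(t)$ fully rigorous inside the self-normalized threshold, because $\tilde N_k(t)$ is itself random and history-dependent and appears both in a ratio and in a denominator; care is needed to ensure the lower bound $\tilde N_k(t)\geq \tau_m N_k(t)-m$ is valid on the relevant event and that the $O(m)$ slack does not interact badly with the inversion of the inequality for small counts. Handling the initial transient (when $N_k(t)$ is below a constant so the confidence bound is vacuous) and verifying that the error terms genuinely depend on $\epsilon$ and $m$ only through the $o_{\epsilon,m}(\log T)$ term is the remaining bookkeeping, which I would carry out by the standard peeling/summation over $t$ rather than detailing it here.
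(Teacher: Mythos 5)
Your overall skeleton is the same as the paper's (optimism decomposition, Proposition~\ref{prop:controlUCB} for the underestimation events, and the comparison $\tilde N_k(t)\geq \tau_m N_k(t-m)\geq \tau_m N_k(t)-m$), but there is a concrete gap in the step where you ``collapse'' the ratio. The over-optimism condition $\sqrt{N_k(t)/\tilde N_k(t)}\,\sqrt{\beta_\epsilon(t)/(2\tilde N_k(t))}\geq \Delta_k$ squares to $\tilde N_k(t)^2 \leq N_k(t)\beta_\epsilon(t)/(2\Delta_k^2)$; substituting $\tilde N_k(t)\approx \tau_m N_k(t)$ gives $\tau_m^2 N_k(t)^2 \lesssim N_k(t)\beta_\epsilon(t)/(2\Delta_k^2)$, hence
\begin{equation*}
N_k(T) \lesssim \frac{(1+\epsilon)\log T}{2\tau_m^2\Delta_k^2},
\end{equation*}
with $\tau_m^2$ in the denominator, not $\tau_m$ as you wrote. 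Since you convert counts to regret via Lemma~\ref{lemma:regret} with per-pull loss $\Delta_k$, your route actually delivers $\sum_{k>1}(1+\epsilon)\log(T)/(2\tau_m^2\Delta_k)$, which is a factor $1/\tau_m$ weaker than the theorem.

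The missing idea is that in the censored setting a suboptimal pull does not cost $\Delta_k$ but only $\tau_m\Delta_k$ (for all but the last $m$ rounds), because the corresponding conversion is observed within the window only with probability at most $\tau_m$; this is exactly the decomposition in Eq.~\eqref{eq:regret_th}, and the paper starts its proof from $L_{\textsc{ucb}}(T)\leq m + \sum_{k>1}\tau_m\Delta_k\,\mathds{E}[\sum_{t>m}^T\mathds{1}\{A_t=k\}]$. Pairing the (correct) count bound in $1/(2\tau_m^2\Delta_k^2)$ with the per-pull loss $\tau_m\Delta_k$ recovers the stated $1/(2\tau_m\Delta_k)$. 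Your appeal to \eqref{eq:regret:relationship} is also slightly off target here: that inequality is a lower bound on $L(T)$ in terms of $\sum_k\Delta_k N_k$ (useful for the lower-bound theorems), whereas what you need for an upper bound is precisely the censored expression \eqref{eq:regret_th}. The remaining bookkeeping you defer (handling the random $\tilde N_k(t)$ inside the self-normalized width, the $O(m)$ slack, and the tail sum) is carried out in the paper by introducing the deterministic threshold $\underline{s}_i$ and an integral comparison, and poses no further obstruction.
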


\begin{proof}
Outline of the proof (cf Appendix~\ref{ap:ucb}):
\begin{enumerate}
	\item First upper-bound the regret using Lemma~\ref{lemma:regret} in the uncensored case:
	\[
	R(T) \leq \sum_{k>1} \Delta_k \mathds{E}[N_k(T)],
	\]
	and bounding the first $m$ losses by $1$ in the censored case:
	\[
	R(T) \leq m + \sum_{k>1} \tau_m \Delta_k \mathds{E}\left[ \sum_{t>m}^{T}\mathds{1}\{A_t = k\} \right].
	\]
	\item Then, decompose the event $\mathds{1}\{A_t = k\}$ as in \cite{auer2002finite}
	\begin{multline*}
		\sum_{t>m}^{T}\mathds{1}\{A_t = k\} \leq \sum_{t>m}^{T}\mathds{1}\left\{ U^{\text{UCB}}_1(t) < \theta_1 \right\} \\
		+ \sum_{t>m}^{T}\mathds{1}\left\{ A_{t+1}=k, U^{\textsc{UCB}}_k(t) \geq \theta_1 \right\}.
	\end{multline*}
	\item Remark that the first sum is handled by Proposition~\ref{prop:controlUCB} so it suffices to control the second sum. 
	\begin{align*}
	&\mathds{E}\left[ \mathds{1}\left\{ A_{t+1}=k, U^{\textsc{UCB}}_k(t) \geq \theta_1 \right\} \right]\\
	& \qquad \leq \frac{(1+\epsilon)\log(T)}{2\tau_m^2 \Delta_i^2} \\
	& \qquad \qquad + \sum_{s > \frac{(1+\epsilon)\log(T)}{2\Delta_i^2} } 
	\mathds{P}\left( U^{\textsc{UCB}}_k(t)   \geq \theta_i + \Delta_i\right) .
	\end{align*}

\end{enumerate}
The  last term is  actually $O(\sqrt{\log(T)})$, giving the desired result. Details, 
as well as explicit constants and dependencies can be found in Appendix~\ref{ap:ucb}.
\end{proof}

\begin{corollary}
	\label{cor:uncensUCB}
In the uncensored setting, we also assume that there exists  $c >0$  such that $1-\tau_m \leq \frac{c}{m}$ for all $m \geq 1$. Then, 
is bounded from above by
	\[
	L_{\textsc{ucb}}(T)\leq \frac{1+\epsilon}{1-\epsilon}\log(T)\sum_{k>1} \frac{1}{2\Delta_k} + o_{\epsilon,m}(\log (T)) .
	\]
\end{corollary}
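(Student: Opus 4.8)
The plan is to mirror the proof of Theorem~\ref{th:ucb}, with the censoring level $\tau_m$ replaced by an effective value close to $1$ that is controlled through the tail assumption $1-\tau_m\leq c/m$. First I would reduce the regret to a count of suboptimal pulls: by Eq.~\eqref{eq:regret}, in the uncensored model $L_{\textsc{ucb}}(T)=\sum_{k>1}\Delta_k\,\mathds{E}[\tilde{N}_k(T)]\leq\sum_{k>1}\Delta_k\,\mathds{E}[N_k(T)]$ since $\tau_{T-s}\leq 1$, so it suffices to bound $\mathds{E}[\tilde{N}_k(T)]$ for each suboptimal arm $k$.

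The crucial new ingredient is to relate the corrected count $\tilde{N}_k(t)$ to the true count $N_k(t)$. Writing $N_k(t)-\tilde{N}_k(t)=\sum_{s\leq t,\,A_s=k}(1-\tau_{t-s})$ and noting that the delays $t-s$ appearing here are distinct nonnegative integers, this difference is at most $\sum_{d=0}^{N_k(t)}(1-\tau_d)\leq (1-\tau_0)+c\sum_{d=1}^{N_k(t)}\tfrac1d=O(\log N_k(t))$, where the assumption $1-\tau_m\leq c/m$ enters precisely here. This is the step where the hypothesis is indispensable: without it $\mu=\sum_{d\geq 0}(1-\tau_d)$ may diverge and the relationship~\eqref{eq:regret:relationship} of Lemma~\ref{lemma:regret} becomes vacuous. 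Consequently $N_k(t)-\tilde{N}_k(t)=o(N_k(t))$, so for every $\epsilon\in(0,1)$ there is a constant $N_0(\epsilon,c)$ such that $N_k(t)\geq N_0(\epsilon,c)$ implies $\tilde{N}_k(t)\geq(1-\epsilon)N_k(t)$, i.e.\ $N_k(t)/\tilde{N}_k(t)\leq 1/(1-\epsilon)$. This ratio plays, in the uncensored analysis, the role that $1/\tau_m$ plays in the censored one.

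With this bound in hand I would run the three steps of the proof of Theorem~\ref{th:ucb}. Decompose $\mathds{1}\{A_{t+1}=k\}$ into the underestimation event $\{U^{\textsc{ucb}}_1(t)<\theta_1\}$, whose probability is summable by Proposition~\ref{prop:controlUCB} with $\beta_\epsilon(t)=(1+\epsilon)\log(t)$ and therefore contributes only $O(1)$, and the event $\{A_{t+1}=k,\,U^{\textsc{ucb}}_k(t)\geq\theta_1\}$. In the confidence width $\sqrt{N_k(t)/\tilde{N}_k(t)}\,\sqrt{\beta_\epsilon(t)/(2\tilde{N}_k(t))}$ the scaling factor $N_k(t)/\tilde{N}_k(t)$ is at most $1/(1-\epsilon)$ once $N_k(t)\geq N_0(\epsilon,c)$; substituting this for the $1/\tau_m$ of the censored computation turns the deterministic threshold $\tilde{N}_k(t)\gtrsim (1+\epsilon)\log(T)/(2\tau_m\Delta_k^2)$ into $\tilde{N}_k(t)\gtrsim (1+\epsilon)\log(T)/(2(1-\epsilon)\Delta_k^2)$. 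Beyond this threshold a pull of $k$ forces an overestimation event whose probabilities, summed as in step~3 of Theorem~\ref{th:ucb}, are $O(\sqrt{\log T})=o(\log T)$. Collecting the main term over $k$ via $L_{\textsc{ucb}}(T)=\sum_{k>1}\Delta_k\,\mathds{E}[\tilde{N}_k(T)]$ gives $\frac{1+\epsilon}{1-\epsilon}\log(T)\sum_{k>1}\frac{1}{2\Delta_k}+o_{\epsilon,m}(\log T)$.

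The main obstacle is the bookkeeping around the time-varying ratio $N_k(t)/\tilde{N}_k(t)$: unlike the constant $\tau_m$ of the censored model, this ratio is controlled only for $N_k(t)\geq N_0(\epsilon,c)$, and $\tilde{N}_k(t)$ keeps drifting upward toward $N_k(t)$ as old delays get revealed, so one must reconcile the horizon value $\tilde{N}_k(T)$ with the pull-time values entering the width, and check that $N_0(\epsilon,c)$ together with the summed overestimation probabilities are genuinely absorbed into the remainder $o_{\epsilon,m}(\log T)$. A secondary point to verify is that the exponent of $(1-\epsilon)$ in the final constant is exactly $1$ and not $2$; this hinges on carrying the factor $\tilde{N}_k(t)/N_k(t)$ through the regret identity $L_{\textsc{ucb}}(T)=\sum_{k>1}\Delta_k\,\mathds{E}[\tilde{N}_k(T)]$, rather than bounding crudely by $\sum_{k>1}\Delta_k\,\mathds{E}[N_k(T)]$ and only then inserting the ratio.
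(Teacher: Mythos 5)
Your argument is correct, but it reaches the key inequality by a different mechanism than the paper. The paper's proof is a two-line reduction: it observes that the censored analysis of Theorem~\ref{th:ucb} remains valid in the uncensored model for an \emph{arbitrary, analysis-only} threshold $m$ (since the uncensored $\tilde{N}_k(t)$ still dominates $\tau_m N_k(t-m)$), picks $m(\epsilon)=\lceil c/\epsilon\rceil$ so that $\tau_m\geq 1-\epsilon$, and uses the fact that the remainder in Theorem~\ref{th:ucb} is only linear in $m$, hence $O(1/\epsilon)$. You instead bound the ratio $N_k(t)/\tilde{N}_k(t)$ directly, via the observation that the delays $t-s$ over the pulls of arm $k$ are distinct integers and $d\mapsto 1-\tau_d$ is nonincreasing, so $N_k(t)-\tilde{N}_k(t)\leq\sum_{d<N_k(t)}(1-\tau_d)=O(c\log N_k(t))$, whence $\tilde{N}_k(t)\geq(1-\epsilon)N_k(t)$ once $N_k(t)\geq N_0(\epsilon,c)$. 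Both routes are sound; yours is more self-contained, makes the role of the hypothesis $1-\tau_m\leq c/m$ completely transparent (the paper only uses it to guarantee a finite $m(\epsilon)$), and avoids having to verify that the censored bound really does dominate the uncensored regret — a point the paper's proof glosses over — at the cost of redoing the three steps of Theorem~\ref{th:ucb} with $1/(1-\epsilon)$ in place of $1/\tau_m$. Your closing caveat about the exponent of $(1-\epsilon)$ is well taken: since each suboptimal pull in the uncensored model costs $\Delta_k\tau_{T-s}\approx\Delta_k$ rather than $\tau_m\Delta_k$, the direct computation most naturally yields $\frac{1+\epsilon}{(1-\epsilon)^2}$, and the paper's reduction does not obviously recover the single power either; since $\epsilon$ is a free parameter this is immaterial asymptotically, but it is a genuine bookkeeping point rather than an error on your side.
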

\begin{proof}
The analysis of \delayedUCB\ given in  Appendix~\ref{ap:ucb} (in the censored setting) shows that  the performances of \delayedUCB\ in the uncensored setting can be upper-bounded by its performances in the censored setting, where  the threshold $m$ can be arbitrarily fixed to some value. Choosing $m$ will only have an impact on the analysis of the algorithm. The specific choice of $m$ satisfying  $\tau_m \geq 1-\epsilon$ gives the claimed result.
As indicated  in Appendix~\ref{ap:ucb}, the dependency of $o_{\epsilon,m}(\log (T))$ is actually  only linear in $m$. As a consequence, along with the assumption on the decay of $1-\tau_m$, this yields that the overall dependency in the parameter $m$ is reduced to $1/\epsilon$. 
\end{proof}

We emphasize  that the assumption that  $1-\tau_m \leq 1/m$, is actually rather natural. Indeed, if $1-\tau_m \leq c/m^\gamma$, for some constants $c,\gamma>0$, then the finiteness requirement on the expected delay is satisfied if and only if $\gamma >1$.

\paragraph{Finite-time Analysis of \delayedKLUCB.}
\begin{theorem}\label{th:klucb}

	For any $\eta >0$, the regret of \delayedKLUCB\ is bounded in the censored setting as
	\begin{align*}
			L_{\textsc{klucb}}(T)\leq &(1+\eta)\frac{\beta_\epsilon (t)}{1-\theta_1}\sum_{k > 1} 
			\frac{\tau_m\Delta_k}{d(\tau_m\theta_k, \tau_m\theta_1)} \\
			&+ o_{\epsilon,m,\eta}(\log (T)).
	\end{align*}

\end{theorem}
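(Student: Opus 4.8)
The plan is to follow the same three-step skeleton as the proof of Theorem~\ref{th:ucb}, replacing the UCB deviation arguments by their KL-UCB counterparts. First I would reduce the regret to a count of suboptimal pulls: by Lemma~\ref{lemma:regret} in the censored case, bounding the first $m$ rounds by $1$ gives
\[
R(T) \le m + \sum_{k>1}\tau_m\Delta_k\,\mathds{E}\!\left[\sum_{t>m}^T \mathds{1}\{A_t=k\}\right],
\]
so it suffices to control $\mathds{E}[N_k(T)]$ for each suboptimal arm $k$. Then I would split $\{A_t=k\}$ into the underestimation event $\{U_1^{\textsc{kl}}(t) < \theta_1\}$ and the event $\{A_{t+1}=k,\ U_k^{\textsc{kl}}(t) \ge \theta_1\}$. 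The first is summable and contributes only a lower-order term thanks to the self-normalized bound $\mathds{P}(\theta_1 > U_1^{\textsc{kl}}(t)) \le e\lceil\beta\log t\rceil e^{-\beta}$ obtained from the Poissonization of Lemma~\ref{lemma:KL-concentration} (Proposition~\ref{th:self-normalized}); controlling the second, main, term is where the work lies.

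For the main term I would use the definition of the index: on $\{U_k^{\textsc{kl}}(t)\ge\theta_1\}$ the maximality defining the index forces $\tilde N_k(t)\,\dP(\hat\theta_k(t),\theta_1) \le \beta_\epsilon(t)$. The crucial observation is that the Poisson divergence scales \emph{exactly}, $\dP(\tau_m p, \tau_m q) = \tau_m\dP(p,q)$ (unlike the Bernoulli KL, which is precisely why $\dP$ is used here). Multiplying through by $\tau_m$ and using $\tilde N_k(t) \ge \tau_m N_k(t) - m$ (each of the at most $m$ recent pulls shifts $\tilde N_k$ from $\tau_m N_k$ by at most $1$, since $\tau_m-\tau_{t-s}\in[0,1]$) yields
\[
N_k(t)\,\dP(\tau_m\hat\theta_k(t),\tau_m\theta_1) \le \beta_\epsilon(t) + m\,\dP(\hat\theta_k(t),\theta_1).
\]
When $\hat\theta_k(t)$ concentrates around $\theta_k$ the left-hand divergence is close to $\dP(\tau_m\theta_k,\tau_m\theta_1)$, and Lemma~\ref{lem:div_bound} converts it to Bernoulli form via $\dP(\tau_m\theta_k,\tau_m\theta_1) \ge (1-\tau_m\theta_1)d(\tau_m\theta_k,\tau_m\theta_1) \ge (1-\theta_1)d(\tau_m\theta_k,\tau_m\theta_1)$. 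This gives $N_k(t) \lesssim \beta_\epsilon(t)/[(1-\theta_1)d(\tau_m\theta_k,\tau_m\theta_1)]$; multiplying by $\tau_m\Delta_k$ and summing over $k$ produces the announced leading term, with the continuity slack and the $O(m)$ correction absorbed into $(1+\eta)$ and $o_{\epsilon,m,\eta}(\log T)$.

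I expect two points to require the most care. The first, and the technical heart of the analysis, is replacing Lemma~\ref{lemma:KL-concentration}, stated only for a pull sequence fixed in advance, by a genuinely self-normalized bound valid for the adaptive, history-dependent pulls of \delayedKLUCB; this is exactly the role of the Poissonization argument and the peeling device behind Proposition~\ref{th:self-normalized}. The second is making rigorous the ``$\hat\theta_k$ concentrates, so the divergence is nearly $\dP(\tau_m\theta_k,\tau_m\theta_1)$'' step: as in the standard KL-UCB analysis one introduces a small deviation $\hat\theta_k(t) \le \theta_k + \zeta$, sums the probability that this fails (again geometrically small by the same inequality, hence lower order), and invokes continuity of $(p,q)\mapsto d(\tau_m p,\tau_m q)$ to trade the $\zeta$-slack for the factor $(1+\eta)$. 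The remaining bookkeeping --- the $m$ initial rounds, the $O(m)$ gap between $\tilde N_k$ and $\tau_m N_k$, and the residual deviation sums --- is routine and all lands inside $o_{\epsilon,m,\eta}(\log T)$.
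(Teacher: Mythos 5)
Your proposal is correct and follows essentially the same route as the paper's proof in Appendix~C.2: the same reduction to $\mathds{E}[N_k(T)]$, the same split into an underestimation event controlled by the self-normalized Poissonized bound (Theorem~\ref{th:self-normalized}) and a main term controlled via the lower bound $\tilde N_k(t)\gtrsim \tau_m N_k$ (up to an $O(m)$ correction), the exact scaling $\dP(\tau_m p,\tau_m q)=\tau_m\dP(p,q)$, a characteristic number of pulls of order $(1+\eta)\beta_\epsilon(t)/\dP(\tau_m\theta_k,\tau_m\theta_1)$ with an $o(\log T)$ tail (the paper invokes Fact~2 of \cite{cappe2013kullback} for exactly the deviation-plus-continuity step you describe), and finally Lemma~\ref{lem:div_bound} to trade $\dP$ for $d$ at the cost of the $1/(1-\theta_1)$ factor. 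The only cosmetic difference is that the paper indexes by $N_k(t-m)=s$ and pays a multiplicative factor $m$ on the tail sum where you use the additive $\tilde N_k(t)\ge\tau_m N_k(t)-m$ correction; both land in the lower-order term.
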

\vspace{-0.25cm}
\begin{proof}
	Outline of the proof (cf. Appendix\ref{ap:klucb}):
\begin{enumerate}
	\item We start by decomposing the regret according to the different types of unfavorable events. Note that thanks to the upper bound on the regret provided by Lemma~\ref{lemma:regret}, we need to control on the number of suboptimal pulls $\mathds{E}[N_k(T)]$ for arms $k>1$.
	\begin{multline*}
	\mathds{E}[N_k(T)] \leq m + \mathds{E}\left[ \sum_{t=m+1}^T \mathds{1}\{U_1(t)<\theta_1 \} \right] \\
	+ \mathds{E}\left[ \sum_{t=m+1}^T \mathds{1}\{ A(t)=k, U_k(t) \geq  \theta_1 \} \right] .
	\end{multline*}
	
	\item The first sum is handled by Theorem~\ref{th:self-normalized} in Appendix~\ref{ap:concentration} which shows that it is $o(\log(T))$. For the second term, we bound the indices using the fact that $\tilde{N}_k(t) \geq \tau_m N_k(t-m)$ to obtain
	\begin{align*}
	& U^{\textsc{kl}}_k(t) \leq  U^{\textsc{kl}+}_k(t)\\
	&:=
	\argmax_{q \in [\hat{\theta}_k,1]} \bigl\{q|\tau_m  \dP(\hat{\theta}_k(t),q)\leq \frac{\beta_\epsilon(t)}{N_k(t-m)}\bigr\} . 
	\end{align*}
	Notice that the $U^{\textsc{kl}+}_k(t)$  indices are well defined for $t>m$.
	
	\item Then, we proceed as in the proof of Theorem~10 in Appendix B.2 of \cite{joulani2013online}. For any $\eta>0$, we define the characteristic number of pulls
	\[
	K_k(T) = \frac{(1+\eta )\beta_\epsilon (t)}{\dP(\tau_m\theta_k,\tau_m\theta_1)} ,
	\]
	and we prove
	\begin{align*}
			\sum_{s\geq K_k(T)} \mathds{P} \left( \tau_m s \dP(\hat{\theta}_{k,s},\theta_1)
			\leq\beta_\epsilon (t) \right) \\
			= o_{\epsilon,m,\eta} (\log T)
	\end{align*}
	using Fact 2 of \cite{cappe2013kullback} for exponential families.
\end{enumerate}

\end{proof}

\begin{corollary}
	In the uncensored setting, under the same hypothesis than in Corollary~\ref{cor:uncensUCB}, namely that there exists a constant $c$ such that $1-\tau_m\leq \frac{c}{m}$ for all $m\leq 1$. Then, the regret of \delayedKLUCB\ is bounded from above as 
	\begin{align*}
		L_{\textsc{klucb}}(T) & \leq \frac{\beta_\epsilon(t)}{1-\theta_1} \sum_{k > 1} 
		\frac{(1+\eta)(1-\epsilon)\Delta_k}{d((1-\epsilon)\theta_k, (1-\epsilon)\theta_1)} \\
		& + o_{\eta,\epsilon}(\log(T)) .
	\end{align*}

\end{corollary}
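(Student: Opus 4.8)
The plan is to mirror the reduction used in the proof of Corollary~\ref{cor:uncensUCB}, transporting it to the Kullback--Leibler index. First I would observe that the uncensored model is the limiting case of the censored one as the window grows: waiting longer never destroys feedback, so for \emph{any} fixed threshold $m$ the count $\tilde{N}_k(t)$ used by the censored variant is a pessimistic (smaller) proxy for the information actually available in the uncensored run. Consequently the finite-time analysis behind Theorem~\ref{th:klucb} applies to the uncensored \delayedKLUCB\ with $m$ treated as a free parameter of the \emph{analysis} rather than of the environment. This yields, for every $m$,
\[
L_{\textsc{klucb}}(T)\leq (1+\eta)\frac{\beta_\epsilon(t)}{1-\theta_1}\sum_{k>1}\frac{\tau_m\Delta_k}{d(\tau_m\theta_k,\tau_m\theta_1)}+o_{\epsilon,m,\eta}(\log T).
\]

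Next I would optimize over $m$. Since $\tau_m\uparrow 1$, and in particular since the hypothesis $1-\tau_m\leq c/m$ guarantees $\tau_m\geq 1-\epsilon$ as soon as $m\geq c/\epsilon$, I fix the smallest such window $m=\lceil c/\epsilon\rceil$. It then remains to replace $\tau_m$ by $1-\epsilon$ inside the per-arm coefficient $\tau_m/d(\tau_m\theta_k,\tau_m\theta_1)$, which is exactly the discrepancy between the censored bound and the claimed one. The crucial analytic ingredient here is the monotonicity of the map $\tau\mapsto \tau/d(\tau\theta_k,\tau\theta_1)$. By Lemma~\ref{lem:incrkappa} the denominator $\tau\mapsto d(\tau\theta_k,\tau\theta_1)$ is convex, and it vanishes at $\tau=0$ because $d(0,0)=0$; a convex function that passes through the origin has a non-decreasing difference quotient, so $\tau\mapsto d(\tau\theta_k,\tau\theta_1)/\tau$ is non-decreasing and its reciprocal is non-increasing. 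Applying this at $\tau_m\geq 1-\epsilon$ gives
\[
\frac{\tau_m}{d(\tau_m\theta_k,\tau_m\theta_1)}\leq \frac{1-\epsilon}{d((1-\epsilon)\theta_k,(1-\epsilon)\theta_1)},
\]
and summing over $k>1$ produces the leading term of the corollary.

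Finally I would account for the remainder. As noted in the UCB case (Corollary~\ref{cor:uncensUCB} and the discussion in Appendix~\ref{ap:ucb}), the $o_{\epsilon,m,\eta}(\log T)$ term is only linear in $m$; substituting $m=\lceil c/\epsilon\rceil$ turns the $m$-dependence into a $1/\epsilon$-dependence, so that the remainder is genuinely $o_{\eta,\epsilon}(\log T)$ and the residual $\eta$-dependence of Theorem~\ref{th:klucb} is preserved.

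The main obstacle, and the step deserving the most care, is precisely this last bookkeeping: as $\epsilon\to 0$ the admissible window $m$ diverges, so one must verify that the window-induced lower-order contributions stay $o(\log T)$ uniformly enough that shrinking $\epsilon$ does not secretly reintroduce a term comparable to $\log T$. The decay hypothesis $1-\tau_m\leq c/m$ is exactly what keeps the trade-off between ``how close $\tau_m$ is to $1$'' and ``how large $m$ must be'' under control, and checking that this trade-off leaves the remainder negligible is where the real work of the proof lies.
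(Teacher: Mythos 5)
Your proposal is correct and follows essentially the same route as the paper: apply the censored-case analysis of Theorem~\ref{th:klucb} with $m$ as a free analysis parameter, choose $m=\lceil c/\epsilon\rceil$ so that $\tau_m\geq 1-\epsilon$, and use the linear-in-$m$ remainder to conclude. Your explicit monotonicity argument for $\tau\mapsto \tau/d(\tau\theta_k,\tau\theta_1)$ (via convexity of $\tau\mapsto d(\tau\theta_k,\tau\theta_1)$ from Lemma~\ref{lem:incrkappa} and its vanishing at the origin) is a welcome justification of the substitution $\tau_m\to 1-\epsilon$ that the paper leaves implicit.
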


\begin{proof}
	As for the proof of Corollary~\ref{cor:uncensUCB}, the performance of \delayedKLUCB\ in the uncensored case can be bounded as in the censored case by a specific choice of $m(\epsilon)$ such that $\tau_m \geq 1-\epsilon$, namely $m(\epsilon)\geq c/\epsilon$. As shown in the proof of Theorem~\ref{th:klucb} in the censored case, the dependency in $m$ of the term of rest is linear, reducing to $1/\epsilon$.
\end{proof}

\paragraph{Naive benchmark: The \textsc{Discarding} policy.}
An obvious benchmark algorithm in the censored setting is to use the regular UCB 
and KLUCB policies only using the first $t-m$ pulls and observed rewards at each time $t$. 
In that case the empirical average considered is simply $\hat{\theta}^m_k(t) = S_k(t-m)/\tau_m N_k(t-m)$ and the
corresponding optimistic indices are
\begin{align*}
& U^{m} (t)= \hat{\theta}^m_k(t) + \sqrt{\beta_\epsilon (t)/2\tau_m N_k(t-m)},\\
& U^{m|\textsc{kl}}_k(t) = 
 \max_{q \in [\hat{\theta}_k^m, 1]} \Bigl\{q |\, \tau_m  \dP(\hat{\theta}_k^m, q) \leq \frac{\beta_\epsilon (t)}{N(t-m)} \Bigr\}.
\end{align*}

These indices can only be computed after at least $m$ rounds. The proof technique used for the analysis of our algorithms in the censored case actually shows that the  \textsc{DiscardingUCB} and \textsc{DiscardingKLUCB} policies are asymptotically optimal. Nonetheless, in practice it
is very undesirable to have an arbitrarily long linear regret phase at the beginning of
the learning until the threshold $m$ is reached. This is  especially true if the threshold $m$ is  large as compared to
the horizon $T$. 
In that case, we empirically show in Section~\ref{sec:experiments} that our algorithms achieve drastically improved short-horizon performance.

\section{EXPERIMENTS}
\label{sec:experiments}

\begin{figure*}[t]
  \centering
  \subfloat[$\theta_H=(0.5,0.4, 0.3)$.]
  {\label{fig:high-rew}
  \includegraphics[width=4cm]{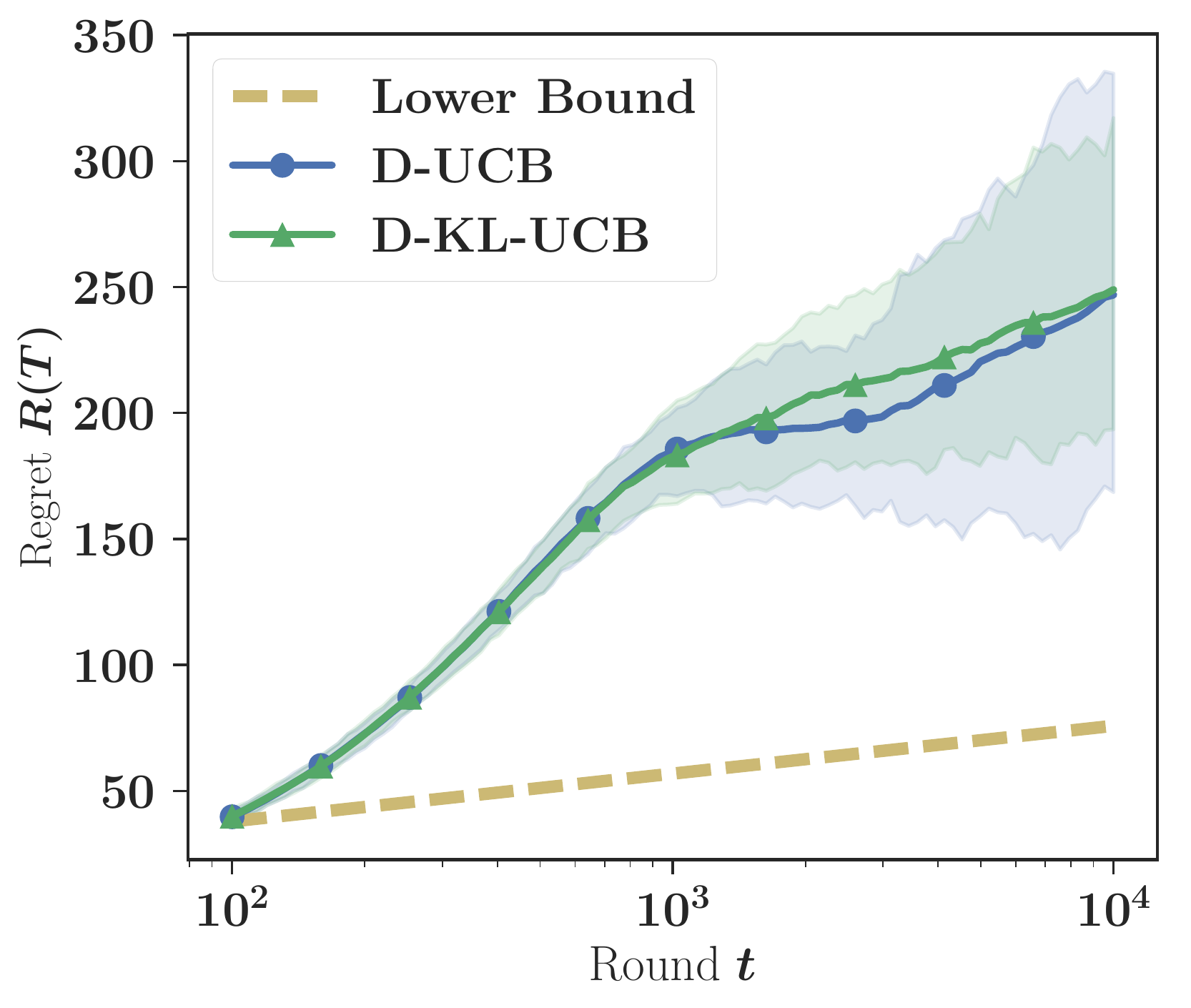}
  \includegraphics[width=4cm]{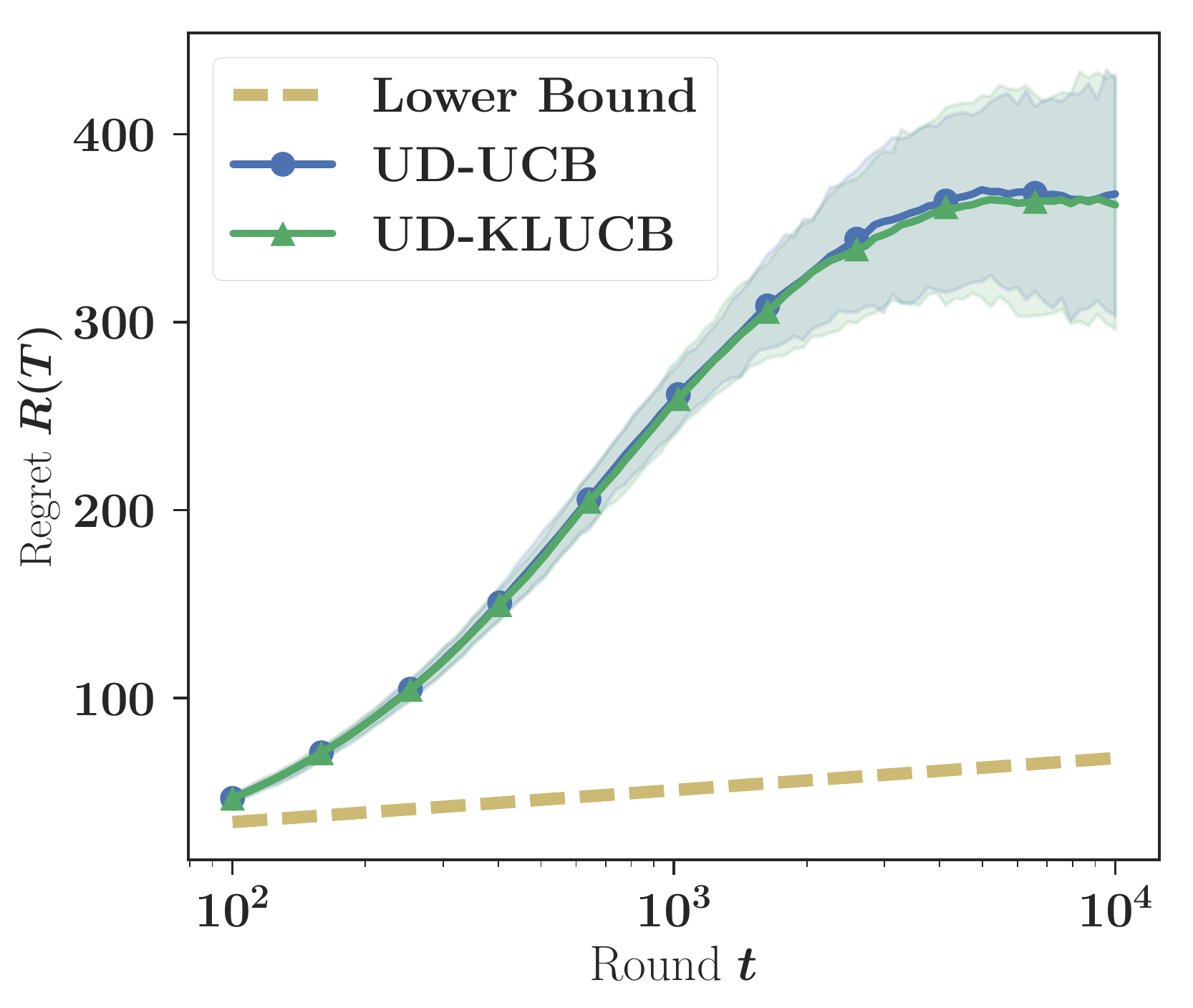}
  }
  ~
  \subfloat[$\theta_L=(0.1,0.05, 0.03)$. ]
  { \label{fig:low-rew}
   \includegraphics[width=4cm]{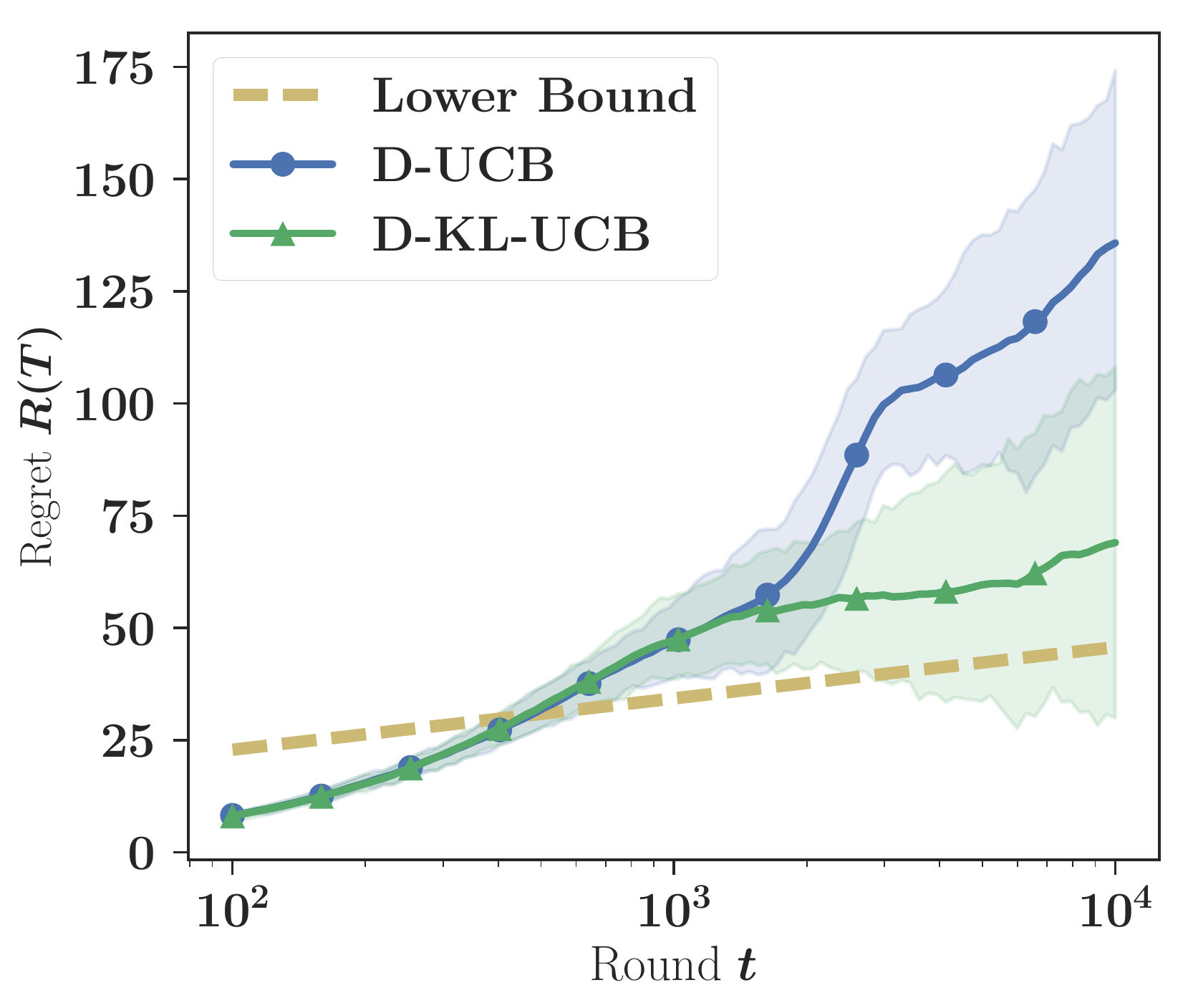}
   \includegraphics[width=4cm]{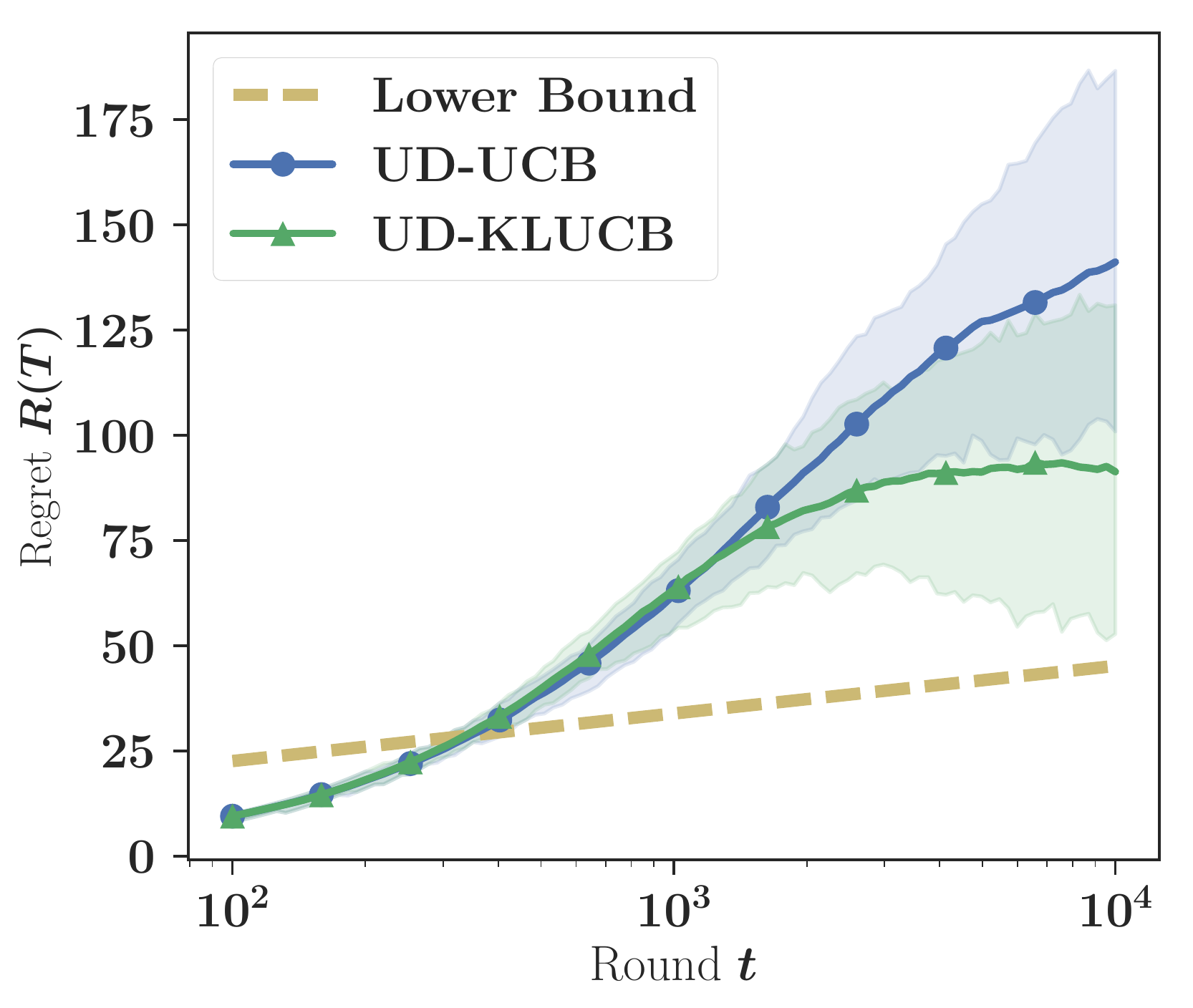} 
   }

 \caption{Expected regret of \textsc{d-ucb} and \textsc{d-klucb} (censored setting), 
 and \textsc{ud-ucb} and \textsc{ud-klucb} (uncensored setting) for two bandit problems: $\theta_H=(0.5,0.4, 0.3)$, $\theta_L=(0.1,0.05, 0.03)$. For all experiments, $T=10000$, $\mu=500$, $m=1000$ (if censored) and the results are averaged over 100 runs.  }
 \label{fig:opt}
\end{figure*}

In this section we perform simulations in our two delayed feedback frameworks. The algorithms described in the previous section will be denoted \textsc{d-ucb} and \textsc{d-klucb} in the censored setting, and \textsc{ud-ucb} and \textsc{ud-klucb} in the uncensored setting.

As a matter of fact, the bottleneck of such policies is to compute $\tilde{N}(t)$ which is theoretically a weighted sum over all past actions and, 
without any assumption on the weights $(\tau_s)_{s\geq 0}$, it requires to store all previous rewards and recompute $\tilde{N}(t)$ at each iteration. 

Following the conclusions of \cite{chapelle2014modeling}, we assume all along this section that the delays follow a geometric distribution with parameter $\lambda := 1/\mu$. 
This assumption allows us to implement our algorithms in a computationally, memory-efficient manner. 
Indeed, for each $s\geq 0$, we now have $(1-\tau_{s+1})=\lambda (1-\tau_s)$ and this remark provides a sequential updating scheme of the quantity $\tilde{N}_k(t)$ for $k\in[K]$.
In the uncensored setting, we have: 
\[
\tilde{N}_k(t) = \sum_{s=1}^t (1-\lambda^{t-s+1})\mathds{1}\{A_s = k\} = N_k(t) - O_k(t) ,
\]
where $O_k(t)$ is updated after each round as follows
\begin{equation}
\label{eq:O}
  O_k(t+1)\gets \lambda O_k(t) + \mathds{1}\{A_t=k\}.
\end{equation}

In the censored setting, however, one must still keep track of some of the previous 
pulls in order to compute 
\[ \tilde{N}_k(t) = N_k(t-m)\tau_{m} +\sum_{s=t-m+1}^{t-1} 
\mathds{1}\{A_s=k\}  \tau_{t-s}.
\]
In practice this can be done by maintaining a buffer of size $m$ 
containing the last $m$ pulls that are multiplied by the 
probability of observing a reward with the delay corresponding to their 
current position in the buffer. In addition to this buffer, we add old 
pulls in a separate count $N_k(t-m)$ for which the weight will stay $\tau_m$. 
	

\paragraph{Comparing \delayedUCB\  and \delayedKLUCB.}

We compare the regret of both delayed bandits policies in the censored and uncensored setting for $T=10000$, $\mu=500$ and $m=1000$.

Simulations on Figure~\ref{fig:opt}, for two problems, $\theta_H=(0.5,0.4, 0.3)$ on the left, and $\theta_L=(0.1,0.05, 0.03)$ on the right,  display the classical pattern that while UCB-based algorithms perform satisfactorily for central values (close to 0.5) of the conversion rate, they are clearly sub-optimal with  more realistic values for the conversion rate. The two right plots also confirm that, for the KLUCB-based algorithms, the loss with respect to the optimal regret growth rate due to the use of the Poisson divergence is -- as expected from Theorem~\ref{th:klucb} -- not significant for low values (here $\theta^*=0.1$) of the conversion rates.


\begin{figure}[hbt]
\centering
\includegraphics[width=0.49\linewidth]{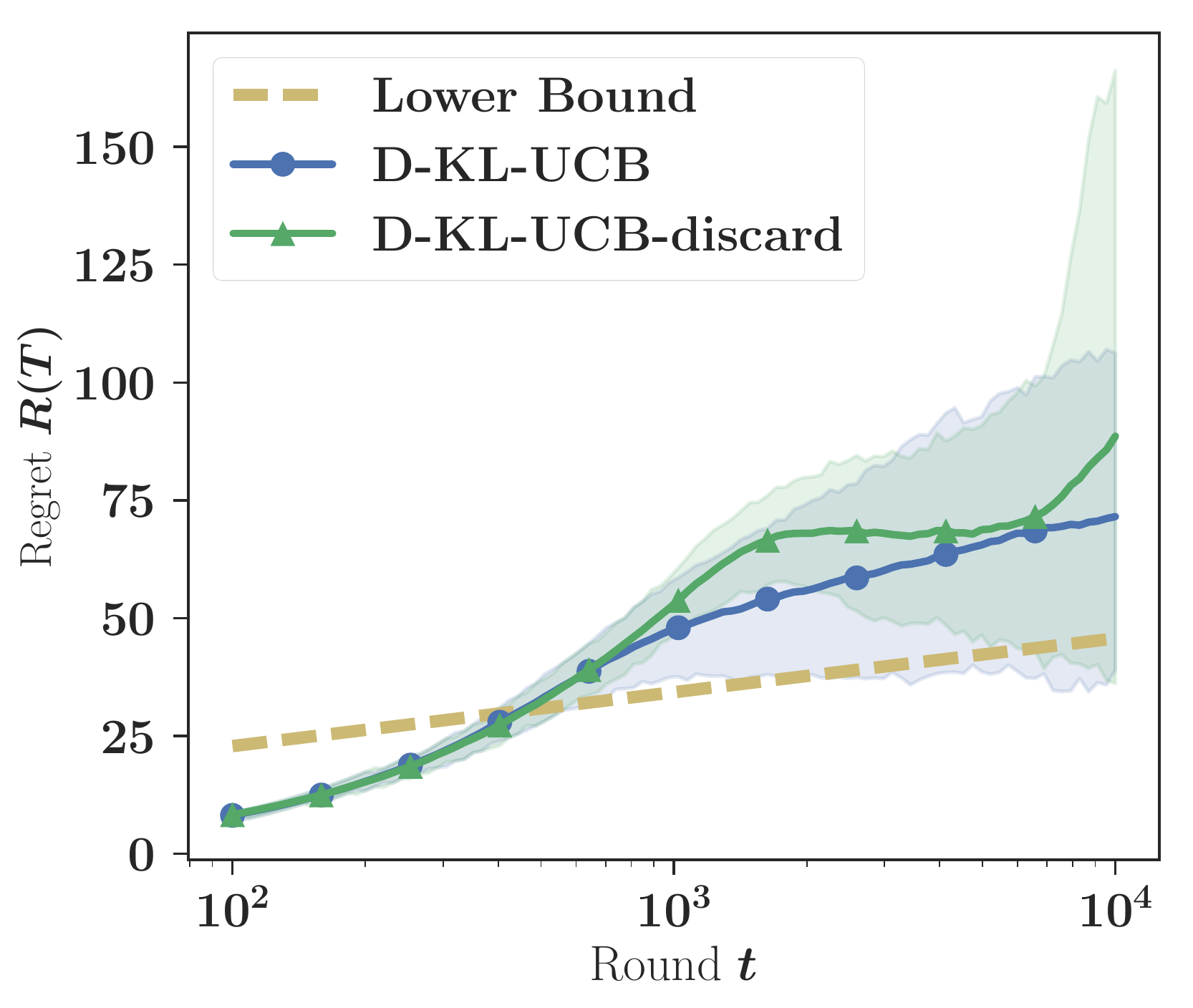}
\includegraphics[width=0.49\linewidth]{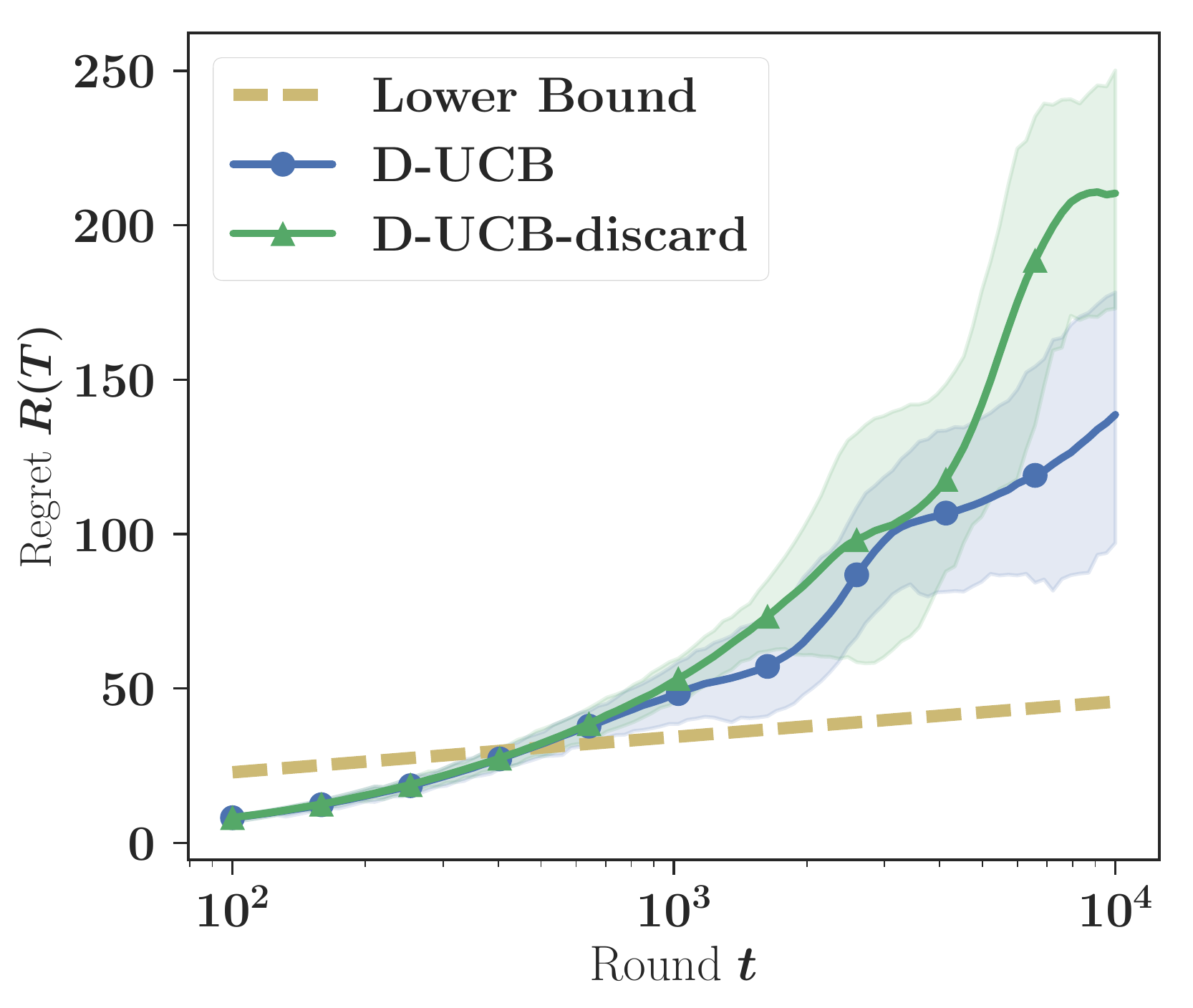}
\caption{Expected regret of \textsc{d-ucb} and \textsc{d-klucb} in the censored setting vs  the equivalent discarding policies when $\mu=500$ and $m=1000$. Results are averaged over 200 independent runs.}
\label{fig:disc}
\end{figure}

\paragraph{\delayedUCB\ and \delayedKLUCB\ vs. \textsc{Discarding}.}

In this section, we illustrate the good empirical initial performance of \delayedUCB\ and \delayedKLUCB, when compared to the heuristic \textsc{Discarding} approach presented in Section~\ref{sec:algorithms}.

Figure~\ref{fig:disc}  compares results for both \delayedUCB\  and \delayedKLUCB\  with $\theta =(0.1,0.05, 0.03)$, $T=10000$, $\mu=500$ and $m=1000$ in the censored setting. 
We observe that discarding policies incur a linear regret phase at the beginning of the learning and happen to catch up with the expected regret growth rate only after a large number of rounds. These figures reveal a non-negligible gap in performance between the naive \textsc{Discarding} approach and our delay-adapted quasi-optimal algorithms.

\section{CONCLUSION}

The stochastic delayed bandit setting introduced in this work addresses an
important problem in many applications where the feedback associated to each
action is delayed and censored, due to the ambiguity between conversions that
will never happen and conversions that will occur at some later -- perhaps
unobservable-- time. Under the hypothesis that the distribution of the delay is
known, we provided a complete analysis of this model as well as simple and
efficient algorithms. An interesting generalization of the present work would
be to relax the model hypothesis and estimate the delay distribution on-the-go,
possibly using context-dependent delay distributions.


\bibliography{references}

\begin{thebibliography}{10}

\bibitem{auer2002finite}
Peter Auer, Nicolo Cesa-Bianchi, and Paul Fischer.
\newblock Finite-time analysis of the multiarmed bandit problem.
\newblock {\em Machine learning}, 47(2-3):235--256, 2002.

\bibitem{cappe2013kullback}
Olivier Capp{\'e}, Aur{\'e}lien Garivier, Odalric-Ambrym Maillard, R{\'e}mi
  Munos, Gilles Stoltz, et~al.
\newblock Kullback--leibler upper confidence bounds for optimal sequential
  allocation.
\newblock {\em The Annals of Statistics}, 41(3):1516--1541, 2013.

\bibitem{cesa2016delay}
Nicolo Cesa-Bianchi, Claudio Gentile, Yishay Mansour, and Alberto Minora.
\newblock Delay and cooperation in nonstochastic bandits.
\newblock In {\em 29th Annual Conference on Learning Theory}, pages 605--622,
  2016.

\bibitem{chapelle2014modeling}
Olivier Chapelle.
\newblock Modeling delayed feedback in display advertising.
\newblock In {\em Proceedings of the 20th ACM SIGKDD international conference
  on Knowledge discovery and data mining}, pages 1097--1105. ACM, 2014.

\bibitem{garivier2011klucb}
Aur{\'e}lien Garivier and Olivier Capp{\'e}.
\newblock The {KL-UCB} algorithm for bounded stochastic bandits and beyond.
\newblock In {\em COLT}, pages 359--376, 2011.

\bibitem{garivier2017explore}
Aur\'{e}lien Garivier, Pierre Menard, and Gilles Stoltz.
\newblock Explore first, exploit next: The true shape of regret in bandit
  problems (\textit{to appear}).
\newblock {\em Mathematics of Operations Research}, 2017.

\bibitem{ji2016probabilistic}
Wendi Ji, Xiaoling Wang, and Dell Zhang.
\newblock A probabilistic multi-touch attribution model for online advertising.
\newblock In {\em Proceedings of the 25th ACM International on Conference on
  Information and Knowledge Management}, pages 1373--1382. ACM, 2016.

\bibitem{joulani2013onlinelong}
Pooria Joulani, Andr{\'{a}}s Gy{\"{o}}rgy, and Csaba Szepesv{\'{a}}ri.
\newblock Online learning under delayed feedback.
\newblock {\em CoRR}, abs/1306.0686, 2013.

\bibitem{joulani2013online}
Pooria Joulani, Andras Gyorgy, and Csaba Szepesvari.
\newblock Online learning under delayed feedback.
\newblock In {\em Proceedings of The 30th International Conference on Machine
  Learning}, pages 1453--1461, 2013.

\bibitem{jun2016top}
Kwang-Sung Jun, Kevin Jamieson, Robert Nowak, and Xiaojin Zhu.
\newblock Top arm identification in multi-armed bandits with batch arm pulls.
\newblock In {\em The 19th International Conference on Artificial Intelligence
  and Statistics (AISTATS)}, 2016.

\bibitem{katariya2016stochastic}
Sumeet Katariya, Branislav Kveton, Csaba Szepesvari, Claire Vernade, and Zheng
  Wen.
\newblock Stochastic rank-1 bandits.
\newblock {\em arXiv preprint arXiv:1608.03023}, 2016.

\bibitem{katsikopoulos2003markov}
Konstantinos~V Katsikopoulos and Sascha~E Engelbrecht.
\newblock Markov decision processes with delays and asynchronous cost
  collection.
\newblock {\em IEEE transactions on automatic control}, 48(4):568--574, 2003.

\bibitem{kaufmann2015complexity}
Emilie Kaufmann, Olivier Capp{\'e}, and Aur{\'e}lien Garivier.
\newblock On the complexity of best arm identification in multi-armed bandit
  models.
\newblock {\em The Journal of Machine Learning Research}, 2015.

\bibitem{lagree2016multiple}
Paul Lagr{\'e}e, Claire Vernade, and Olivier Capp{\'e}.
\newblock Multiple-play bandits in the position-based model.
\newblock In {\em Advances in Neural Information Processing Systems}, 2016.

\bibitem{lai1985asymptotically}
Tze~Leung Lai and Herbert Robbins.
\newblock Asymptotically efficient adaptive allocation rules.
\newblock {\em Advances in applied mathematics}, 6(1):4--22, 1985.

\bibitem{mandel2015queue}
Travis Mandel, Yun-En Liu, Emma Brunskill, and Zoran Popovi{\'c}.
\newblock The queue method: Handling delay, heuristics, prior data, and
  evaluation in bandits.
\newblock In {\em Twenty-Ninth AAAI Conference on Artificial Intelligence},
  2015.

\bibitem{PerchetAl2016}
Vianney Perchet, Philippe Rigollet, Sylvain Chassang, and Eric Snowberg.
\newblock Batched bandit problems.
\newblock {\em The Annals of Statistics}, 44(2):660--681, 2016.

\bibitem{rosales2012post}
R{\'o}mer Rosales, Haibin Cheng, and Eren Manavoglu.
\newblock Post-click conversion modeling and analysis for non-guaranteed
  delivery display advertising.
\newblock In {\em Proceedings of the fifth ACM international conference on Web
  search and data mining}, pages 293--302. ACM, 2012.

\bibitem{walsh2009learning}
Thomas~J Walsh, Ali Nouri, Lihong Li, and Michael~L Littman.
\newblock Learning and planning in environments with delayed feedback.
\newblock {\em Autonomous Agents and Multi-Agent Systems}, 18(1):83--105, 2009.

\end{thebibliography}
\bibliographystyle{plain}

\newpage

\appendix
\onecolumn

\section{Concentration results}
\label{ap:concentration}

\subsection{Poissonization of the KL indices}
\label{ap:Poisson}
We require variants of Lemma 9 and Theorem 10 in \cite{garivier2011klucb} adapted to our setting.
\begin{lemma}
\label{lemma:laplace}
For $\theta \in [0,1]$, $(\tau_i)_{1 \leq i \leq L} \in [0,1]$, let
$(X_{i,j})_{1\leq i \leq L, j \geq 1}$ be a collection of independent Bernoulli
random variables such that $ \mathds{E}(X_{i,j}) = \tau_i\theta$ and
$(\epsilon_{i,j}) \in \{0,1\}$ associated deterministic indicators. For
$1\leq i \leq L$, denote by $n_i = \sum_{j=1}^\infty \epsilon_{i,j}$ and whe
shall assume that all $n_i$ are finite an that at least one of them is
non-zero. Let $X = \sum_{i=1}^L \sum_{j=1}^\infty \epsilon_{i,j} X_{i,j}$
and denote by $\phi(\lambda) = \log \mathds{E}\left[\exp(\lambda X)\right]$ its
log-Laplace transform and by
$\phi^*(x) = \sup_{\lambda} x\lambda - \phi(\lambda)$ the associated convex
conjugate (Fenchel--Legendre transform).

Then, for all $\lambda \in \mathds{R}$,
\begin{equation}
\label{eqap:loglaplace-bound:pois}
\phi(\lambda) \leq \left(\sum_{i=1}^L \tau_i n_i \right) \theta \left(e^\lambda - 1\right) ,
\end{equation}
and, for all $x \geq 0$,
\begin{equation}
\label{eqap:divergence-bound:pois}
\phi^*(x) \geq \left(\sum_{i=1}^L \tau_i n_i \right) \dP\left(\frac{x}{\sum_{i=1}^L \tau_i n_i},\theta\right) ,
\end{equation}
where $\dP(p,q) = p\log p/q + q-p$ denotes the Poisson Kullback-Leibler divergence. 
\end{lemma}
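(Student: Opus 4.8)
The plan is to compute the log-Laplace transform $\phi$ in closed form, dominate it by the log-Laplace transform of a suitable Poisson law, and then transfer this domination to the convex conjugate by invoking the order-reversing property of the Fenchel--Legendre transform. This mirrors the template of Lemma~9 in \cite{garivier2011klucb}, the only new ingredients being the weights $\tau_i$ and the heterogeneous success probabilities $\tau_i\theta$.

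First I would use the independence of the $X_{i,j}$. Since $\epsilon_{i,j}\in\{0,1\}$, only the terms with $\epsilon_{i,j}=1$ contribute, and with the Bernoulli moment generating function $\mathds{E}[e^{\lambda X_{i,j}}]=1+\tau_i\theta(e^\lambda-1)$ this gives
\[
\phi(\lambda)=\sum_{i=1}^L n_i\log\bigl(1+\tau_i\theta(e^\lambda-1)\bigr).
\]
Because $\tau_i\theta\in[0,1]$ and $e^\lambda-1>-1$ for every finite $\lambda$, each argument of the logarithm is strictly positive, so the elementary bound $\log(1+u)\le u$ applies termwise and yields \eqref{eqap:loglaplace-bound:pois}, namely $\phi(\lambda)\le N\theta(e^\lambda-1)$ with the shorthand $N=\sum_{i=1}^L\tau_i n_i$.

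The crucial observation is that $\psi(\lambda):=N\theta(e^\lambda-1)$ is exactly the log-Laplace transform of a $\mathrm{Poisson}(N\theta)$ variable. Since $\phi\le\psi$ pointwise, the inequality reverses under conjugation:
\[
\phi^*(x)=\sup_\lambda\{x\lambda-\phi(\lambda)\}\ge\sup_\lambda\{x\lambda-\psi(\lambda)\}=\psi^*(x).
\]
It then remains to evaluate the Poisson rate function $\psi^*$: optimizing over $\lambda$ (taking $e^\lambda=x/(N\theta)$ for $x>0$) gives $\psi^*(x)=x\log\frac{x}{N\theta}-x+N\theta$, and factoring $N$ out of this expression identifies it, by the very definition of $\dP$, with $N\,\dP(x/N,\theta)$, which is precisely \eqref{eqap:divergence-bound:pois}.

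I do not expect a genuine obstacle; the whole argument rests on the two ideas above. The only points demanding care are verifying that the logarithm's argument remains positive so that $\log(1+u)\le u$ is legitimate for all real $\lambda$, and treating the boundary case $x=0$, where $\psi^*(0)=N\theta$ is recovered through the convention $0\log 0=0$. Finiteness of each $n_i$ makes $\phi$ a finite sum and hence well defined, while the hypothesis that some $n_i\neq 0$ guarantees $N>0$, so the division by $N$ in the statement is meaningful.
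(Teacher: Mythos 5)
Your proof is correct and follows essentially the same route as the paper: the only cosmetic difference is that you bound $\log(1+\tau_i\theta(e^\lambda-1))$ via $\log(1+u)\le u$ whereas the paper phrases the same inequality as the tangent at $\tau_i=0$ of a concave function of $\tau_i$, and both then conjugate against the Poisson log-Laplace transform. No gaps.
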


\begin{proof}
By direct calculation,
\[
\phi(\lambda) = \sum_{i=1}^L n_i \log\left(1 - \tau_i\theta + \tau_i\theta e^{\lambda}\right) .
\]

The function $\tau_i \to \log\left(1 - \tau_i\theta + \tau_i\theta e^{\lambda}\right)$ is a strictly concave function on $[0,1]$ and we upper bound it by its tangent in 0, that is,
\begin{equation*}
\log\left(1 - \tau_i\theta + \tau_i\theta e^{\lambda}\right)  \leq \tau_i \theta (e^{\lambda}-1) ,
\end{equation*}
which yields~\eqref{eqap:loglaplace-bound:pois} upon summing on $i$.

The r.h.s. of~\eqref{eqap:loglaplace-bound:pois} is easilly recognized as the log-Laplace transform of the Poisson distribution with expectation $\left(\sum_{i=1}^L \tau_i n_i \right) \theta$. To obtain~\eqref{eqap:divergence-bound:pois}, we use the observations that $x \lambda - a \left(e^\lambda - 1\right)$ is maximized for $\lambda = \log (x/a)$ where it is equal to $\dP(x,a)$ as well as the fact that $\dP(\tau x,\tau a) = \tau \dP(x, a)$.
\end{proof}

Lemma~\ref{lemma:laplace} bounds the log-Laplace transform of the Bernoulli distribution with that of the Poisson distribution with the same mean and uses the stability of the Poisson distribution. Using $\dP(p, q)$ instead of $d(p,q)$ --where $d(p,q) = p\log p/q + (1-p)\log[1-p)/(1-q)$ denotes the Bernoulli Kullback-Leibler divergence-- will of course induce a performance gap, which is however not significant for low values of the probabilities as shown by the Lemma~\ref{lem:div_bound}, which we recall below.

\newcounter{temp}
\setcounter{temp}{\thetheorem}
\setcounter{theorem}{7}
\begin{lemma}\label{8}
	For $0<p<q<1$,
$$
  (1-q) d(p,q) \leq \dP(p,q) \leq d(p,q).
$$
\end{lemma}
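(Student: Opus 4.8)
The plan is to prove the two inequalities $(1-q)d(p,q) \leq \dP(p,q) \leq d(p,q)$ separately by writing out the explicit difference of the two divergences and controlling its sign. Recall the definitions: $d(p,q) = p\log(p/q) + (1-p)\log\frac{1-p}{1-q}$ and $\dP(p,q) = p\log(p/q) + q - p$. Since both divergences share the common term $p\log(p/q)$, the comparison reduces entirely to comparing $(1-p)\log\frac{1-p}{1-q}$ with $q-p$. Thus I would first isolate
\begin{equation*}
d(p,q) - \dP(p,q) = (1-p)\log\frac{1-p}{1-q} - (q-p),
\end{equation*}
and the whole problem becomes a one-dimensional analysis of this quantity as a function of $q$ (with $p$ fixed).

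For the upper bound $\dP(p,q) \leq d(p,q)$, I would show the displayed difference is nonnegative, i.e. $(1-p)\log\frac{1-p}{1-q} \geq q-p$ for $0<p<q<1$. The cleanest route is to set $g(q) := (1-p)\log\frac{1-p}{1-q} - (q-p)$, note $g(p) = 0$, and differentiate: $g'(q) = \frac{1-p}{1-q} - 1 = \frac{q-p}{1-q} \geq 0$ on the relevant range. Hence $g$ is nondecreasing and $g(q)\geq g(p)=0$, giving the upper bound. Equivalently, one may apply the elementary inequality $\log x \geq 1 - 1/x$ to $x = \frac{1-p}{1-q}$, which yields $(1-p)\log\frac{1-p}{1-q} \geq (1-p)\bigl(1 - \frac{1-q}{1-p}\bigr) = q-p$ in one line.

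For the lower bound $(1-q)d(p,q) \leq \dP(p,q)$, I would rearrange it as $d(p,q) - \dP(p,q) \leq q\, d(p,q)$, or directly analyze $h(q) := \dP(p,q) - (1-q)d(p,q)$ and show $h(q)\geq 0$. Expanding, $(1-q)d(p,q) = (1-q)\bigl[p\log(p/q) + (1-p)\log\frac{1-p}{1-q}\bigr]$, and subtracting from $\dP(p,q)$ gives a function that again vanishes at $q=p$; I would check $h(p)=0$ and verify the sign of $h'(q)$ on $(p,1)$. The main obstacle I anticipate is this lower bound: unlike the upper bound, the factor $(1-q)$ multiplies both logarithmic terms, so the derivative $h'(q)$ will not collapse to a simple rational expression and its nonnegativity may require an auxiliary elementary inequality (for instance bounding $\log(1/q)$ or the term $p\log(p/q)$ from below) rather than a one-line tangent-line argument. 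Once the sign of $h'$ is pinned down on the interval, monotonicity from the vanishing boundary value at $q=p$ closes the proof, and together with the upper bound this establishes the full chain of inequalities.
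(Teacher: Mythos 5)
Your upper bound is complete and coincides with the paper's argument: both reduce to $(1-p)\log\frac{1-p}{1-q}\geq q-p$, which you get from $\log x\geq 1-1/x$ (the paper uses the equivalent $-\log(1+x)\geq -x$). The lower bound is where you stop short: you correctly reduce it to showing $h(q):=\dP(p,q)-(1-q)d(p,q)\geq 0$ via $h(p)=0$ plus monotonicity in $q$, but you leave the sign of $h'(q)$ unverified and explicitly flag it as the step that may need an auxiliary inequality. As written, the lower bound is therefore not proved. The good news is that your worry is unfounded: writing $h(q)=qp\log(p/q)+q-p-(1-q)(1-p)\log\frac{1-p}{1-q}$ and differentiating in $q$, the terms $-p+1-(1-p)$ cancel and one is left with
\[
h'(q)=p\log\frac{p}{q}+(1-p)\log\frac{1-p}{1-q}=d(p,q)\geq 0,
\]
so the derivative does collapse to a manifestly nonnegative quantity, and monotonicity from $h(p)=0$ closes the argument with no extra inequality.

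For comparison, the paper takes the mirror-image route: it fixes $q$, views the same difference as a function of $p$, notes that it vanishes at $p=q$, and computes its $p$-derivative to be $q\log\frac{p}{q}+(1-q)\log\frac{1-p}{1-q}=-d(q,p)\leq 0$, so the difference is nonincreasing in $p$ and hence nonnegative for $p\leq q$. Both are one-line monotonicity arguments in which the derivative is (up to sign) a Kullback--Leibler divergence; you should actually carry out one of these computations rather than leaving the sign of the derivative as an anticipated obstacle.
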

\setcounter{theorem}{\thetemp}

\begin{proof} For the upper bound,
  \begin{equation*}
    d(p,q) - \dP(p,q) = (1-p)\log\frac{1-p}{1-q} - (q - p)
    = -(1-p) \log(1+\frac{p-q}{1-p}) + (p-q) \geq 0,
  \end{equation*}
  using $-\log(1+x) \geq -x$.

For the lower bound,
\begin{equation}
\label{q:sfdjqos}
\dP(p,q) - (1-q)d(p,q) = qp \log \frac{p}{q} + q -p - (1-q)(1-p)\log\left(\frac{1-p}{1-q}\right) .
\end{equation}
One has $\dP(q,q) = d(q,q) = 0$ and the derivative of~\eqref{q:sfdjqos} wrt. $p$ is equal to
\[
  q \log \frac{p}{q} + (1-q) \log\left(\frac{1-p}{1-q}\right) = - d(q,p) \leq 0.
\]
Hence, $\dP(p,q) - (1-q)d(p,q)$ is positive when $p \leq q$.
\end{proof}

We can now prove the concentration result stated in Lemma~\ref{lemma:KL-concentration} that we recall below for readability purpose.

\setcounter{temp}{\thetheorem}
\setcounter{theorem}{6}
\begin{lemma}
	Assume that the sequence of pulls is fixed beforehand and let $k$ be an arm in $\{1,...,K\}$. Then for any $\delta > 0$ and for all $t>0$, 
	\[
	\mathds{P}\left( \left\{\hat{\theta}_k(t) < \theta_k \right\} \cap \left\{\tilde{N}_k(t)\dP(\hat{\theta}_k(t),\theta_k) > \delta\right\} \right) <  
	e^{-\delta}.
	\]
	where $\dP(p,q) = p\log p/q + q-p$ denotes the Poisson Kullback-Leibler divergence.	
\end{lemma}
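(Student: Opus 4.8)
The plan is to recognize $S_k(t)$ as a sum of independent Bernoulli random variables to which Lemma~\ref{lemma:laplace} applies, and then run a Cram\'er--Chernoff argument for the lower tail. First I would use the assumption that the sequence of pulls is fixed beforehand: this makes the contributions $X_{u,t} = C_u\mathds{1}\{D_u \leq t-u\}$ of the past pulls of arm $k$ genuinely independent Bernoulli variables with $\mathds{E}[X_{u,t}] = \theta_k \tau_{t-u}$, so that $S_k(t)$ is a sum of such contributions, one per pull of $k$, and $\mathds{E}[S_k(t)] = \theta_k \tilde N_k(t)$. Grouping the pulls according to the distinct values of the delay-weight $\tau_{t-u}$ (finitely many; in the censored case all pulls older than $m$ share the common weight $\tau_m$), I identify $S_k(t)$ with the variable $X$ of Lemma~\ref{lemma:laplace}, with $\theta = \theta_k$ and $\sum_i \tau_i n_i = \tilde N_k(t)$. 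Writing $\phi$ for the log-Laplace transform of $S_k(t)$ and $\phi^*$ for its convex conjugate, Lemma~\ref{lemma:laplace} then yields
\[
\phi^*(x) \geq \tilde N_k(t)\, \dP\!\left(\frac{x}{\tilde N_k(t)}, \theta_k\right), \qquad x \geq 0.
\]

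Next I would convert the target event into a single deterministic lower-tail threshold event. Since $\mathds{E}[S_k(t)] = \theta_k \tilde N_k(t)$, the event $\{\hat\theta_k(t) < \theta_k\}$ is exactly $\{S_k(t) < \theta_k \tilde N_k(t)\}$, i.e.\ the lower tail. The map $p \mapsto \dP(p,\theta_k)$ is continuous and strictly decreasing on $[0,\theta_k]$, from $\dP(0,\theta_k)=\theta_k$ down to $0$ (its $p$-derivative is $\log(p/\theta_k) \leq 0$). Hence for $\delta < \tilde N_k(t)\theta_k$ there is a unique $\gamma \in [0,\theta_k)$ with $\tilde N_k(t)\dP(\gamma,\theta_k) = \delta$, and on the event under consideration one has $\hat\theta_k(t) < \gamma$, i.e.\ $S_k(t) < \gamma\, \tilde N_k(t)$. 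For $\delta \geq \tilde N_k(t)\theta_k$ the event is empty, since $\tilde N_k(t)\dP(\hat\theta_k(t),\theta_k) \leq \tilde N_k(t)\dP(0,\theta_k) = \tilde N_k(t)\theta_k$, so the claimed bound holds trivially.

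Then I would apply the Chernoff bound at the deterministic threshold $x = \gamma\, \tilde N_k(t) < \mathds{E}[S_k(t)]$: optimizing $e^{\phi(\lambda) - \lambda x}$ over $\lambda < 0$ gives $\mathds{P}(S_k(t) \leq \gamma\, \tilde N_k(t)) \leq e^{-\phi^*(\gamma \tilde N_k(t))}$, the supremum defining $\phi^*$ at a lower-tail point being attained at a negative $\lambda$. Combining with the lower bound on $\phi^*$ above,
\[
\phi^*\!\big(\gamma\, \tilde N_k(t)\big) \geq \tilde N_k(t)\,\dP(\gamma,\theta_k) = \delta,
\]
so the probability is at most $e^{-\delta}$, as claimed.

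The only genuinely delicate point is the second step: the quantity $\tilde N_k(t)\dP(\hat\theta_k(t),\theta_k)$ appearing in the event is itself random, so a Chernoff bound cannot be applied to it directly. The monotonicity of $p \mapsto \dP(p,\theta_k)$ on the lower branch is precisely what lets me replace this random divergence constraint by a fixed threshold $\gamma$ on $S_k(t)$, after which everything is deterministic. The appearance of the Poisson divergence $\dP$ rather than the Bernoulli divergence $d$ is forced here: it is exactly the rate function obtained by dominating the Bernoulli log-Laplace transform by the Poisson one in Lemma~\ref{lemma:laplace}, and Lemma~\ref{lem:div_bound} quantifies the (small, for low $\theta_k$) price paid for this relaxation.
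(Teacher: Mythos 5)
Your proof is correct and follows essentially the same route as the paper's: apply Chernoff's method with the Poissonized log-Laplace bound of Lemma~\ref{lemma:laplace} to get $\mathds{P}(\hat\theta_k(t)<x)\leq e^{-\tilde N_k(t)\dP(x,\theta_k)}$, then use the monotonicity of $p\mapsto\dP(p,\theta_k)$ on $[0,\theta_k]$ to rewrite the threshold event as the event involving the random divergence. Your treatment is slightly more detailed (you also dispose of the degenerate case $\delta\geq\tilde N_k(t)\theta_k$, which the paper leaves implicit), but the argument is the same.
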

\setcounter{theorem}{\thetemp}

\begin{proof}
To bound $\mathds{P}(\hat{\theta}_k(t) < x) = \mathds{P}(S_k(t) < \tilde{N}_k(t)x)$, for $0<x<\theta_k$, apply Chernoff's method using the result of Lemma~\ref{lemma:laplace} to obtain
	\[
	\mathds{P}(\hat{\theta}_k(t) < x) \leq e^{-\tilde{N}_k(t) \dP(x, \theta_k)}.
	\]
	Using that $x \mapsto \dP(x,\theta_k)$ is decreasing on $[0,\theta_k]$, we can apply it on both side of the inequality on the left-hand side to obtain
	\[
		\mathds{P}\left(\left\{\hat{\theta}_k(t) < \theta_k \right\} \cap \left\{\tilde{N}_k(t)\dP(\hat{\theta}_k(t),\theta_k)>\tilde{N}_k(t)\dP(x,\theta_k)\right\}\right) \leq e^{-\tilde{N}_k(t) \dP(x, \theta_k)}.
	\]
	Denoting $\delta = \tilde{N}_k(t) \dP(x, \theta_k)$ yields the desired result.
\end{proof}

\begin{theorem}
	\label{th:self-normalized}
	Consider $(\tau_i)_{1 \leq i\leq L} \in [0,1]$, $\theta \in(0, 1)$, and independent sequences $(X_i(s))_{s\geq1}$ of independent Bernoulli random variables such that $\mathds{E} X_i(s) = \tau_i\theta$. Let $\mathcal{F}_t$ denote an increasing sequence of sigma-fields, such that for each $t$ and all $i$, $\sigma(X_i(1),\ldots,X_i(t))\subset \mathcal{F}_t$. Also consider a predictable sequence of indicator variables $\epsilon_i(s) \in \{0,1\}$, that is, such that $\sigma(\epsilon_1(t+1),\ldots,\epsilon_L(t+1))\subset \mathcal{F}_t$.

       Define
	\[
	S_i(t)=\sum_{s=1}^{t} \epsilon_i(s) X_i(s), \qquad N_i(t) = \sum_{s=1}^t \epsilon_i(s);
	\]
	and the pooled quantities
	\[
	S(t) = \sum_{i=1}^{L} S_i(t), \qquad N(t) = \sum_{i=1}^{L} N_i(t), \qquad \tilde{N}(t) = \sum_{i=1}^{L}\tau_i N_i(t), \qquad \hat{\theta}(t)=\frac{S(t)}{\tilde{N}(t)} .
	\]
	The KLUCB index, defined as,
	\[
	U^{\textsc{kl}}(n) =  \max \left\{ q \in \left[\hat{\theta}(n), \theta_M\right] \, : \, \tilde{N}(n)\dP(\hat{\theta}(n),q)\leq \delta \right\}.
	\]
	satisfies
	\[
	\mathds{P}\left( U(n) \leq \theta\right) \leq e \lceil \delta \log (n) \rceil e^{-\delta} .
	\]
\end{theorem}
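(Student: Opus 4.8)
The plan is to adapt the peeling argument behind Theorem~10 of \cite{garivier2011klucb} to the pooled, heterogeneous Bernoulli observations and to the Poisson divergence $\dP$, using Lemma~\ref{lemma:laplace} as the log-Laplace ingredient. First I would reduce the index event to a lower-deviation event for $\hat\theta(n)$. Since for $q>\hat\theta(n)$ the map $q\mapsto \dP(\hat\theta(n),q)=\hat\theta(n)\log(\hat\theta(n)/q)+q-\hat\theta(n)$ is strictly increasing (its $q$-derivative is $1-\hat\theta(n)/q>0$), the feasible set defining $U^{\textsc{kl}}(n)$ is an interval $[\hat\theta(n),U^{\textsc{kl}}(n)]$; hence $\{U^{\textsc{kl}}(n)\le\theta\}$ forces $\hat\theta(n)<\theta$ and $\tilde N(n)\dP(\hat\theta(n),\theta)\ge\delta$, and it suffices to bound the probability of this last event. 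I would also record the homogeneity identity $\tilde N(n)\dP(\hat\theta(n),\theta)=\dP(S(n),\theta\tilde N(n))$ together with the variational formula $\dP(S(n),\theta\tilde N(n))=\sup_{\lambda}[\lambda S(n)-\theta\tilde N(n)(e^{\lambda}-1)]$ coming from Lemma~\ref{lemma:laplace}, whose optimum is attained at $\lambda^\star=\log(\hat\theta(n)/\theta)<0$ on the event of interest.

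The core step is to build, for each fixed $\lambda$, the process $W_t(\lambda)=\exp(\lambda S(t)-\theta(e^{\lambda}-1)\tilde N(t))$ and to show it is an $(\mathcal F_t)$-supermartingale. Conditioning on $\mathcal F_{t-1}$, the indicators $\epsilon_i(t)$ are known by predictability while the $X_i(t)$ are independent $\operatorname{Bernoulli}(\tau_i\theta)$, so the conditional moment generating function of the increment $S(t)-S(t-1)=\sum_i\epsilon_i(t)X_i(t)$ factorizes over the active coordinates; applying the tangent bound $1-\tau_i\theta+\tau_i\theta e^{\lambda}\le\exp(\tau_i\theta(e^{\lambda}-1))$ established inside the proof of Lemma~\ref{lemma:laplace} to each such coordinate yields $\mathds{E}[W_t(\lambda)\mid\mathcal F_{t-1}]\le W_{t-1}(\lambda)$. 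Thus $\mathds{E}[W_n(\lambda)]\le W_0=1$, and Markov's inequality gives $\mathds{P}(\lambda S(n)-\theta(e^{\lambda}-1)\tilde N(n)\ge u)\le e^{-u}$ for every fixed $\lambda$ and $u\ge 0$. This is precisely where the predictability of the $\epsilon_i$ enters, and it is what lets us dispense with the fixed-pull-sequence restriction of Lemma~\ref{lemma:KL-concentration}.

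Because the optimal tilt $\lambda^\star$ depends on the data through $\hat\theta(n)$ and $\tilde N(n)$, I would then peel over the effective count. On the event of interest $\dP(\hat\theta(n),\theta)\le\dP(0,\theta)=\theta<1$ (the map $p\mapsto\dP(p,\theta)$ is decreasing on $[0,\theta]$), so $\tilde N(n)>\delta$, while $\tilde N(n)\le N(n)\le n$ since at most one observation is incorporated per round in our setting; hence $\tilde N(n)\in(\delta,n]$. I would cover this range by a geometric grid $\delta=v_0<v_1<\cdots<v_D=n$ with ratio $\rho=e^{1/\delta}$, so that $D=\lceil\delta\log(n/\delta)\rceil\le\lceil\delta\log n\rceil$ cells suffice and so that, on each cell $\{v_{k-1}\le\tilde N(n)<v_k\}$, replacing $\lambda^\star$ by the fixed tilt associated with the cell's endpoint degrades the exponent by at most the multiplicative factor $e$. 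On each cell the deviation bound of the previous step then gives probability at most $e^{1-\delta}$, and a union bound over the $D$ cells produces $\mathds{P}(U^{\textsc{kl}}(n)\le\theta)\le e\lceil\delta\log n\rceil e^{-\delta}$.

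The hard part will be the bookkeeping of the third step rather than any single inequality: the transition from the deterministic pull sequence of Lemma~\ref{lemma:KL-concentration} to genuinely predictable indicators is cleanly handled by the supermartingale of Step~2, but recovering the stated constant requires calibrating the geometric ratio and the per-cell tilts so that the slack accumulated across cells is exactly the factor $e$ and the cell count is exactly $\lceil\delta\log n\rceil$. I would expect the delicate points there to be verifying that the per-cell loss is uniformly bounded by $e$ and confirming that the grid indeed spans $(\delta,n]$ with the claimed number of cells, so as to land on $e\lceil\delta\log n\rceil e^{-\delta}$ rather than a looser bound.
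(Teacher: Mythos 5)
Your proposal is correct and follows essentially the same route as the paper: the key step is the identical supermartingale $W_t(\lambda)=\exp\bigl(\lambda S(t)-\theta(e^{\lambda}-1)\tilde N(t)\bigr)$, whose supermartingale property rests on the predictability of the $\epsilon_i$ and the tangent bound of Lemma~\ref{lemma:laplace}, followed by the peeling of Theorem~10 of \cite{garivier2011klucb} with $N(t)$ replaced by $\tilde N(t)$. The paper simply delegates the peeling bookkeeping to that reference, whereas you spell it out; the calibration you describe (geometric ratio $e^{1/\delta}$, per-cell loss $e$, $\lceil\delta\log n\rceil$ cells) is the standard one and lands on the stated constant.
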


\begin{proof}
	The proof is analogous to that of Theorem 10 of \cite{garivier2011klucb} and we only detail the step that differs, namely, the identification of the supermartingale $W_t^\lambda$.

Define, $W_0^\lambda = 1$ and, for $t\geq 1$, 
\[
 W^{\lambda}_t = \exp\left(\lambda S(t) - \tilde{N}(t) \theta\left(e^\lambda-1\right) \right).
\]
\begin{align*}
  \mathds{E}\left[\exp(\lambda(S(t+1)-S(t))) \left\vert \mathcal{F}_t \right. \right] & = \mathds{E}\left[ \left. \exp\left(\lambda \sum_{i=1}^L \epsilon_i(t+1) X_i(t+1)\right)   \right\vert \mathcal{F}_t \right] \\
    & \leq \exp\left( \left(\sum_{i=1}^L \tau_i \epsilon_i(t+1) \right) \theta \left(e^\lambda-1\right) \right) \\
    & = \exp\left( \left(\tilde{N}(t+1)-\tilde{N}(t) \right) \theta \left(\theta e^\lambda-1\right) \right) ,
	\end{align*}
where we have used~\eqref{eqap:loglaplace-bound:pois} and the definition of $\tilde{N}(t)$. Multiplying both sides of the inequality by $\exp\left(\lambda S(t) - \tilde{N}(t+1) \theta \left(e^\lambda-1\right) \right)$ show that $\mathds{E} W_{t+1}^\lambda \leq \mathds{E} W_{t}^\lambda$ and hence that $W_t^\lambda$ is a supermartingale.

The rest of the proof is as in \cite{garivier2011klucb} replacing $N(t)$ by $\tilde{N}(t)$ and $\phi_\mu(\lambda)$ by $\theta\left(e^\lambda-1\right)$.
\end{proof}


\section{Details on the Lower Bound Results}
\label{ap:lower_bound}

We provide here the details of the proof of Theorems~\ref{th:lb-uncens}~
and~\ref{th:lb-cens}. The key result that we use is a lower bound on the
log-likelihood ratio under two alternative bandit models $\theta$ and $\theta'$ that
do not have the same best arm. Namely, according to Lemma~1 of
\cite{kaufmann2015complexity}, we have 
\[\liminf_{T\to \infty}
\frac{\mathds{E}[\ell_T]}{\log(T)} \geq 1.
\]
Now, considering specific changes of measures $\theta'$ that only modify the distribution of one single suboptimal arm, 
we are going to obtain lower bounds on each expected number of pulls $\mathds{E}[N_k(T)]$ for $k\neq 1$ as in Appendix~B of \cite{kaufmann2015complexity}.

\paragraph{Uncensored Setting: } As argued in Section~\ref{sec:lower-bound}, in the uncensored setting the likelihood of the observations is	
\[
	\mathds{E}_\theta\left[ \ell_T\right] = \sum_{s=1}^T d(\theta_{A_s}\tau_{T-s}, \theta_{A_s}'\tau_{T-s}) .
\]
Now, fix arm $k \neq 1$ and for $\epsilon >0$, consider 
$\theta'=(\theta_1,\ldots,\theta_{k-1},\theta_1+\epsilon,\ldots,\theta_K)$. For this change of measure, the expected log-likelihood only contains the terms involving arm $k$:
\[
	\mathds{E}_\theta\left[ \ell_T\right] = \sum_{s=1}^T \mathds{1}\{A_s=k\} d(\theta_k\tau_{T-s}, (\theta_1+\epsilon)\tau_{T-s}) .
\]
Now, in order to obtain an expression that involves $\mathds{E}[N_k(T)]$, we need to bound from above this sum using Lemma~5 of the Appendix~B of \cite{katariya2016stochastic}, which we recall here for completeness.

\begin{lemma}\label{lem:incrkappa}
	Let $p,q$ be any fixed real numbers in $(0,1)$. The function 
	$f:\alpha 
	\mapsto d(\alpha p,\alpha q)$  is convex and increasing on $(0,1)$. As 
	a consequence, for any $\alpha<1$, $d(\alpha p, \alpha q) < d(p,q)$.
\end{lemma}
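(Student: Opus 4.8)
The plan is to prove the two assertions of Lemma~\ref{lem:incrkappa} separately, establishing that $f(\alpha) = d(\alpha p, \alpha q)$ is increasing and convex on $(0,1)$, and then deducing the strict inequality $d(\alpha p, \alpha q) < d(p,q)$ for $\alpha < 1$ as a corollary. First I would write out the function explicitly,
\[
f(\alpha) = \alpha p \log\frac{p}{q} + (1-\alpha p)\log\frac{1-\alpha p}{1-\alpha q},
\]
using that the first logarithmic term $\alpha p \log\frac{\alpha p}{\alpha q}$ simplifies because the factors of $\alpha$ cancel inside the logarithm. This simplification is the key structural observation: only the second term retains a nontrivial dependence on $\alpha$ through the arguments of the logarithm.

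Next I would differentiate. Computing $f'(\alpha)$ I expect the term $p\log(p/q)$ to combine with the derivative of $(1-\alpha p)\log\frac{1-\alpha p}{1-\alpha q}$. After simplification the derivative should take a clean form; I anticipate it equals $p\log\frac{p(1-\alpha q)}{q(1-\alpha p)}$ or a closely related expression, which is nonnegative precisely when $p < q$ (since then $\frac{1-\alpha q}{1-\alpha p} < 1$ but the ratio $p/q$ and the behavior combine to give monotonicity — I would verify the sign carefully). To confirm monotonicity I would check that $f'(\alpha) \geq 0$ on $(0,1)$, ideally by recognizing the derivative as itself a KL-type or log-ratio quantity whose sign is transparent. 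For convexity I would compute $f''(\alpha)$ and show it is nonnegative; I expect the second derivative to reduce to a manifestly positive rational expression in $\alpha, p, q$, something of the form $\frac{p(q-p)}{(1-\alpha p)(1-\alpha q)} \cdot (\text{positive factor})$ or similar, where positivity follows from $0 < \alpha p, \alpha q < 1$.

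Once monotonicity is in hand, the final inequality $d(\alpha p, \alpha q) < d(p,q)$ for $\alpha < 1$ follows immediately: since $f$ is increasing and $\alpha < 1$, we have $f(\alpha) < f(1) = d(p,q)$, with the inequality strict because $f$ is strictly increasing (its derivative is strictly positive on the interior whenever $p \neq q$, which holds here since $p,q$ are distinct — the case $p=q$ being trivial as both sides vanish).

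The main obstacle I foresee is the sign verification in the derivative computation: KL-divergence derivatives are notorious for producing expressions whose positivity is not visually obvious and requires either a clever regrouping into a recognizable divergence or an auxiliary elementary inequality (such as $\log(1+x) \leq x$, already used elsewhere in this paper). I would therefore allocate most of the care to expressing $f'(\alpha)$ in a form where the sign is unambiguous, perhaps by writing it as a difference of two logarithms that can be bounded termwise. The convexity bound via $f''$ should be more mechanical, but I would keep an eye out for cancellations that make the positivity transparent rather than expanding blindly.
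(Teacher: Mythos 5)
The paper does not actually prove this lemma: it is imported verbatim from Lemma~5 of Katariya et al.\ (the reference \texttt{katariya2016stochastic}) ``for completeness,'' so there is no in-paper argument to compare yours against. Your direct-calculus plan is the natural one and it does go through, but since you flagged the sign verification as the main risk, here is the confirmation. Your guessed derivative is missing two terms: differentiating
\[
f(\alpha)=\alpha p\log\tfrac{p}{q}+(1-\alpha p)\log\tfrac{1-\alpha p}{1-\alpha q}
\]
gives
\[
f'(\alpha)=p\log\frac{p(1-\alpha q)}{q(1-\alpha p)}-p+\frac{q(1-\alpha p)}{1-\alpha q}
=\dP\!\left(p,\ \frac{q(1-\alpha p)}{1-\alpha q}\right),
\]
i.e.\ exactly the ``recognizable divergence'' you hoped to land on --- here the Poisson divergence $\dP(p,\tilde q)=p\log(p/\tilde q)+\tilde q-p$ already used elsewhere in the paper, which is nonnegative and vanishes only when $p=\tilde q$, which forces $p=q$. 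So $f'>0$ on $(0,1)$ whenever $p\neq q$, and monotonicity plus continuity at $\alpha=1$ gives $f(\alpha)<f(1)=d(p,q)$. For convexity, the second derivative collapses after the cancellation you anticipated:
\[
f''(\alpha)=\frac{(p-q)^2}{(1-\alpha p)(1-\alpha q)^2}\ \geq\ 0 .
\]
Two small remarks. First, note that the log-ratio $p\log\frac{p(1-\alpha q)}{q(1-\alpha p)}$ alone does \emph{not} have a definite sign (it is negative near $\alpha=0$ when $p<q$), so the extra affine terms $-p+q(1-\alpha p)/(1-\alpha q)$ are essential and the Poisson-divergence regrouping is the right way to make positivity transparent. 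Second, the strict inequality in the lemma's conclusion of course fails when $p=q$ (both sides are zero), as you observed; this is a harmless imprecision in the statement, since in its application $p=\theta_k$ and $q=\theta_1+\epsilon$ are distinct.
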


Thus, for each $s\geq 1$ we have $\tau_{T-s}\leq 1$ and according to the above result,
\[
d(\theta_k, (\theta_1+\epsilon)) \geq d(\theta_k\tau_{T-s}, (\theta_1+\epsilon)\tau_{T-s})
\]
and 
\[
\mathds{E}[N_k(T)] d(\theta_k,\theta_1+\epsilon) \geq \sum_{s=1}^T \mathds{1}\{A_s=k\} d(\theta_k\tau_{T-s}, (\theta_1+\epsilon)\tau_{T-s}).
\]
We obtain
\[
 \liminf_{T\to \infty} \frac{\mathds{E}[N_k(T)] d(\theta_k,\theta_1+\epsilon)}{\log(T)}  \geq \liminf_{T\to \infty} \frac{\mathds{E}[\ell_T]}{\log(T)} \geq 1.
\]
Letting $\epsilon \to 0$ yields
\[
 \liminf_{T\to \infty} \frac{\mathds{E}[N_k(T)] }{\log(T)} \geq \frac{1}{d(\theta_k,\theta_1)}.
\]
In order to bound the expected regret $L_T$, we use the inequality~\eqref{eq:regret:relationship} from Lemma~\ref{lemma:regret}:
\[
L_T\geq \sum_{k=2}^K (\theta_{1} - \theta_k) \left(\mathds{E}[N_k(t)] - \mu\right),
\]

where $\mu = \mathds{E}[D_s]$. We now lower bound each $\mathds{E}[N_k(t)]$ and under the assumption that $\mathds{E}[D_s]<\infty$ and we use that $\liminf_{T\to \infty} \mu/\log(T) = 0$ to obtain
\[
 \liminf_{T\to \infty} \frac{L_T }{\log(T)} \geq  \liminf_{T\to \infty} \frac{\sum_{k=2}^K (\theta_{1}- \theta_k) \left(\mathds{E}[N_k(t)] - \mu\right)}{\log(T)} \geq \sum_{k=2}^{K}\frac{(\theta_{1}- \theta_k)}{d(\theta_k,\theta_1)}.
\]

\paragraph{Censored Setting: } 
The proof in the Censored Setting follows the same step as the proof above expect for the fact that we do not require Lemma~5 of \cite{katariya2016stochastic} in order to bound the log-likelihood ratio. We directly have 
\begin{align*}
	\mathds{E}_\theta\left[ \ell_T\right] =& \sum_{s=1}^{T-m} d(\theta_{A_s}\tau_{m}, \theta_{A_s}'\tau_{m})+ \sum_{s=T-m}^T d(\theta_{A_s}\tau_{T-s}, \theta_{A_s}'\tau_{T-s}).	
\end{align*}
Proceeding as above and considering the adequate change of measure involving only one suboptimal arm $k$ and taking $\epsilon \to 0$, we obtain 
\[
 \liminf_{T\to \infty} \frac{\mathds{E}[N_k(T)] d(\tau_m\theta_k,\tau_m\theta_1) + \sum_{s=T-m}^T d(\theta_k\tau_{T-s}, \theta_1\tau_{T-s})}{\log(T)}  = 
 \liminf_{T\to \infty} \frac{\mathds{E}[N_k(T)] d(\tau_m\theta_k,\tau_m\theta_1)}{\log(T)}  \geq 1 ,
\]
where we used the fact that the second term of the sum in the left-hand side is finite.
The end of the proof is similar to the uncensored setting case treated above where we can simply bound the regret according to Eq.~\eqref{eq:regret_th} as
\[
L_T\geq \sum_{k=2}^k \tau_m(\theta_{1}- \theta_k) \mathds{E}[N_k(T-m)] + \sum_{s=T-m+1}^{T} \tau_{T-s}(\theta_1 - \theta_{A_s}) 
\]
in order to obtain the asymptotic lower bound.


\section{Analysis of \delayedUCB\ and \delayedKLUCB}

In order to control the empirical averages of the rewards of each arm for different values of $N_k(t)$, we introduce the notation $\hat{\theta}_{k,s}:=\sum_{u=1}^s X_{k,u}/s$ for the mean over the first $s$ pulls of $k$.

\subsection{\delayedUCB}
\label{ap:ucb}

In this section, we provide the complete proof of Theorem~\ref{th:ucb}. 





	We decompose the regret after bounding by $1$ the first $m$ losses of the policy :
	\[
	L_{\textsc{ucb}}(T) \leq m + \sum_{k>1} \tau_m \Delta_k \mathds{E}\left[ \sum_{t>m}^{T}\mathds{1}\{A_t = k\} \right].
	\]
	Hence we only need to bound the number of suboptimal pulls, as in the 
	seminal proof by \cite{auer2002finite}.
	For any suboptimal $k>1$, we have:
	\begin{align*}
		\mathds{E}[N_k(T)]  \leq 1 & + 
		\sum_{t=K+1}^{T}\mathds{P}\left( U^{\textsc{UCB}}_1(t) < \theta_1 \right) \\
		& + \sum_{t=K+1}^{T} \mathds{P}\left( A_{t+1}=k, U^{\textsc{UCB}}_k(t) \geq \theta_1 \right).
	\end{align*}

	While the first term is simply handled by Proposition~\ref{prop:controlUCB} and is $O(1/\epsilon^3)=o(\log(T)$, 
	the second one must be controlled as in the original proof of UCB1 by \cite{auer2002finite} using the fact that for all $t>m$,
	\[
	\frac{N_k(t)}{\tilde{N}_k(t)} \leq \frac{N_k(t-m)+m}{\tilde{N}_k(t)} \leq \frac{1}{\tau_m} + \frac{m}{\tau_m N_k(t-m)}
	\]
	that allows us to upper-bound the optimistic indices as 
	\[
	\hat{\theta}_k(t) + \left( \frac{1}{\tau_m} + \frac{m}{\tau_m N_k(t-m)} \right) \sqrt{\frac{\beta_\epsilon(t)}{2 N_k(t)}} 
	\geq U^{\text{UCB}}_k(t).
	\] 
	Then, we use this upper bound on the indices in order to bound the relevant sum of probabilities.
	\begin{align*}
		&\sum_{t=m+1}^{T} \mathds{P}\left(A_{t+1}=k, U^{\text{UCB}}_k(t) \geq \theta_1 \right) \\
		& \leq \mathds{E}\left[ \sum_{s\geq 1 } \mathds{1} \left\{ \hat{\theta}_{i,s} 
		+ \left( \frac{1}{\tau_m} + \frac{m}{\tau_m s} \right)
		\sqrt{\frac{\beta_\epsilon(t)}{2s}} \geq \theta_i + \Delta_i \right\} \right] .
	\end{align*}

In order to upper-bound this expectation, we first introduce the quantity $\underline{s}_i >0$  defined by 
$$
\left( \frac{1}{\tau_m} + \frac{m}{\tau_m \underline{s}_i} \right)
		\sqrt{\frac{\beta_\epsilon(t)}{2\underline{s}_i}} =  \Delta_i,
$$
that we rewrite, with the introduction of $\gamma_i>0$, as 
$$
\underline{s}_i= \frac{\beta_\epsilon(t)}{2\tau_m^2\Delta_i^2} (1+\gamma_i)^2 \quad \text{ so that we get } \quad \left( 1+ \frac{m}{ \underline{s}_i} \right)\frac{1}{1+\gamma_i} = 1.
$$
Simple computations finally leads to, if $\gamma_i \leq 1$, 
\begin{align*}
\frac{2m\tau_m^2\Delta_i^2}{\beta_\epsilon(t)} = \gamma_i(1+\gamma_i)^2 \leq 4 \gamma_i.
\end{align*}
As a consequence, if $T$ is big enough (so that the left hand side is smaller than 4), we get that 
$$
\underline{s}_i \leq \frac{(1+\varepsilon)\log(T)}{2\tau_m^2\Delta^2_i}(1+\gamma_i)^2 \leq \frac{(1+\varepsilon)\log(T)}{2\tau_m^2\Delta^2_i}(1+3\gamma_i) \leq  \frac{(1+\varepsilon)\log(T)}{2\tau_m^2\Delta^2_i} + m.
$$
We now focus on the sum to upper-bound:
\begin{align*}
\mathds{E}\left[ \sum_{s\geq 1 } \mathds{1} \left\{ \hat{\theta}_{i,s} 
		+ \left( \frac{1}{\tau_m} + \frac{m}{\tau_m s} \right)
		\sqrt{\frac{\beta_\epsilon(t)}{2s}} \geq \theta_i + \Delta_i \right\} \right]  &\leq \lceil\underline{s}_i\rceil+1 + \sum_{s > \lceil\underline{s}_i\rceil}e^{ -2s \left(\Delta_i - \left( \frac{1}{\tau_m} + \frac{m}{\tau_m s} \right) \sqrt{\frac{\beta_\epsilon(t)}{2s}} \right)^2}\\
		& \leq  \underline{s}_i+2 + \sum_{s > \lceil\underline{s}_i\rceil}e^{ -2 \left(\sqrt{s}\Delta_i - \left( 1+ \frac{m}{ \underline{s}_i} \right) \sqrt{\frac{\beta_\epsilon(t)}{2\tau_m^2}} \right)^2},
\end{align*}
where we used the Chernoff's inequality for bounded random variables.


Standard computations (comparisons between sums and integrals) give the following
\begin{align*}
\sum_{s > \lceil\underline{s}_i\rceil}e^{ -2 \left(\sqrt{s}\Delta_i - \left( 1+ \frac{m}{ \underline{s}_i} \right) \sqrt{\frac{\beta_\epsilon(t)}{2\tau_m^2}} \right)^2}
& \leq  \int_{\underline{s}_i}^\infty e^{ -2 \left(\sqrt{s}\Delta_i - \left( 1+ \frac{m}{ \underline{s}_i} \right) \sqrt{\frac{\beta_\epsilon(t)}{2\tau_m^2}}\right)^2} ds \\
&\leq \frac{1}{2\Delta_i^2}\left(1+\frac{\sqrt{2\pi}}{4}\left( 1+ \frac{m}{ \underline{s}_i} \right) \sqrt{\frac{\beta_\epsilon(t)}{2\tau_m^2}}\right)\\
& \leq  \frac{1}{2\Delta_i^2}\left(1 + \frac{\sqrt{2\pi}}{4}\sqrt{\underline{s}_i}\Delta_i\right) \\
& \leq\frac{1}{2\Delta_i^2}\left(1 + \frac{\sqrt{\pi}}{4}\sqrt{\frac{(1+\epsilon)\log(T)}{\tau_m^2}} +  \frac{\sqrt{\pi}}{4}\sqrt{m}\right).
\end{align*}
As a consequence, we have just proved that 
$$
\sum_{t=m+1}^{T} \mathds{P}\left(A_{t+1}=k, U^{\text{UCB}}_k(t) \geq \theta_1 \right) \leq \frac{(1+\epsilon)\log(T)}{2\tau_m^2\Delta^2_i} + o(\log(T))\ .
$$
More precisely, combining all our claims yields that
$$
L_{\textsc{ucb}}(T) \leq  \frac{(1+\epsilon)\log(T)}{2\tau_m^2\Delta_i}  + O\left( \frac{1}{\Delta_i}\sqrt{\frac{(1+\epsilon)\log(T)}{2\tau_m^2}}\right) + O\left(\frac{1}{\Delta_i}\frac{1}{\epsilon^3}\right) + O\left(\frac{\sqrt{m}}{\Delta_i}+m\right),
$$
and the result follows.

%
%

\subsection{\delayedKLUCB}
\label{ap:klucb}


We follow the steps of \cite{garivier2011klucb} and decompose the regret as
	\begin{align*}
		\mathds{E}[N_k(T)]  \leq & 1 + m-K
		\sum_{t=m+1}^{T}\mathds{P}\left( U^{\textsc{kl}}_1(t) < \theta_1 \right)  + \sum_{t=m+1}^{T} \mathds{P}\left( A_{t+1}=k, U^{\textsc{KL}}_k(t) \geq \theta_1 \right)\\
		&\leq 1 + m -K + \sum_{t=m+1}^{T}\mathds{P}\left( U^{\textsc{KL}}_1(t) < \theta_1 \right) +  \sum_{t=m+1}^{T} \mathds{P}\left( A_{t+1}=k, U^{\textsc{kl}}_k(t) \geq \theta_1 \right).
	\end{align*}

	The first term of the above sum is handled by Theorem~\ref{th:self-normalized} that shows that it is $o(\log(T))$. We must now bound the second sum corresponding to the cases when suboptimal indices reach the optimal mean $\theta_1$. 
	To proceed, we simply notice that for all $t$, $\tilde{N}_k(t) \geq \tau_m N_k(t-m)$. We define an alternative optimistic index that upper bounds $U^{\textsc{kl}}_k(t)$ for $t>m$: 
	\[
	U^{\textsc{kl}}_k(t) \leq \argmax_{q \in [\hat{\theta}_k,1]} \{q | \tau_m N_k(t-m) \dP(\hat{\theta}_k(t),q)\leq \beta_\epsilon(t)\}
	:=U^{\textsc{kl}+}_k(t).
	\]

	Now we can finish the proof following the steps of the proof of Theorem 2 in \cite{garivier2011klucb}.
	First, we denote 
	\[
	K_k(T) = \frac{(1+\eta)\beta_\epsilon(t)}{ d(\tau_m\theta_k,\tau_m\theta_1)}
	\]
	and we decompose the second sum after bounding the first $K_k(T)$ terms by 1 and bounding the remaining terms in a similar way as in Lemma 11 of \cite{joulani2013online}:
	\begin{align*}
		\sum_{t=m+1}^{T} &\mathds{P}\left( A_{t+1}=k, U^{\textsc{kl}+}_k(t) \geq \theta_1 \right) 
		\leq K_k(T) 
		+ \sum_{t \geq K_k(T)+m+1} \mathds{P}\left( A_{t+1}=k, U^{\textsc{kl}+}_k(t) \geq \theta_1 \right)\\
		& \leq K_k(T) 
		+ \mathds{E}\left[ \sum_{t \geq K_k(T)+m+1} \sum_{s=1}^{t} 
		\mathds{1}\left\{ A_{t+1}=k, N_k(t-m)=s, \tau_m s \dP(\hat{\theta}_{k,s},\theta_1)
		\leq \beta_\epsilon(t) \right\} \right]\\
		& \leq K_k(T) 
		+ \mathds{E}\left[ \sum_{s=K_k(T)}^{T} 
		\mathds{1}\left\{ \tau_m s \dP(\hat{\theta}_{k,s},\theta_1)
		\leq \beta_\epsilon(t) \right\}
		\sum_{t = s} ^{T} \mathds{1}\left\{ A_{t+1}=k, N_k(t-m)=s\right\} \right]\\
		&\leq K_k(T) + m \sum_{s\geq K_k(T)} \mathds{P} \left(\tau_m s \dP(\hat{\theta}_{k,s},\theta_1)
		\leq \beta_\epsilon(t) \right)\\
		&\leq K_k(T) + \frac{m C_2(\eta)}{T^{f(\eta)}},
	\end{align*}
	where the last inequality comes from the fact that for all $s\in \{1,\ldots, T\}$, $\sum_{t = 1} ^{T} \mathds{1}\left\{ A_{t+1}=k, N_k(t-m)=s\right\} \leq m$ and from the proof of Fact 2 for exponential family bandits in \cite{cappe2013kullback} that proves the existence of the constants $C_2(\eta)$ and $f(\eta)$ that achieve the bound.
	
	We can now upper bound the regret thanks to the decomposition provided by equation~\ref{eq:regret_th}:
		\[
		L_{\textsc{klucb}}(T) \leq m + \sum_{k>1} \tau_m \Delta_k \mathds{E}\left[ N_k(T)\right] 
		\leq (1+\eta) \beta_\epsilon(t)\sum_{k=2}^{K}\frac{\tau_m \Delta_k}{\dP(\tau_m\theta_k,\tau_m\theta_1)} + o(\log(T)).
		\]

	To obtain the final result, we use Lemma~\ref{lem:div_bound} that shows that for $\theta_k<\theta_1$, $\dP(\tau_m\theta_k,\tau_m\theta_1) > (1-\tau_m \theta_1) d(\tau_m \theta_k, \tau_m \theta_1)$. Thus, 
		\[
		L_{\textsc{klucb}}(T) \leq m + (1+\eta) \frac{\beta_\epsilon(t)}{1-\tau_m \theta_1}
		\sum_{k=2}^{K}\frac{\tau_m \Delta_k}{d(\tau_m\theta_k,\tau_m\theta_1)} + o(\log(T)).
		\]

\section{Additionnal experiments  on delay agnostic policies}

As a last additional contribution to this work, we suggest a distribution-agnostic heuristic 
that estimates the CDF parameters $(\tau_d)_{d\geq 0}$ in an online fashion. 
Indeed, as the delay distribution is assumed to be shared between actions, each observed reward provides an information on the delays that can be exploited to estimate the CDF without having to deal with the exploration-exploitation dilemma. 

\paragraph{Uncensored setting. } In the Uncensored setting and under the geometric assumption on the distribution of the delays, the entire CDF can be retrieved using an estimate of the unique parameter $\lambda = 1/\mu$. 
To this aim, we build an estimate the expected delay at round $t$, $\hat{\mu}(t)$, using a stochastic approximation process with decreasing weights $\alpha_t= 1/t^\gamma$ for $1\geq \gamma \geq 0.5$. When an observation $D_t$ arrives we update
\[
\hat{\mu}(t) \gets (1-\alpha_t) \hat{\mu}(t)+ \alpha_t D_t.
\]
Then we use this estimator as a plug-in quantity to compute $O_k(t)$ defined in \eqref{eq:O} for all $k$. 

\paragraph{Censored setting. } In the Censored setting however, no observation comes after the threshold $m$ and this does not allow us to directly estimate the expected delay $\mu$ as the longest observations are censored. 
To circumvent this problem, we choose to estimate biased parameters for $\tau_1,\ldots,\tau_m$. 
Concretely, we initialize counts for the observed delay values $\delta_0 = (0,\ldots,0)\in \mathds{N}^{m+1}$ (delay can be null). Then, after each observation $D_t$, we increment all the counts $\delta_s$ for $s\geq D_t$. 
Then, the biased empirical CDF is obtained by normalizing those counts by the total number of observations received up to time $t$, $n_d(t)$.
We emphasize that the obtained estimators are biased: 
For each $s\in \{0,\ldots,m\}, \mathds{E}[\delta_s(t)/n_d(t)] = \tau_s/\tau_m$ as all observed delays are smaller or equal to $m$.
 Thus, plugging those estimates in $\tilde{N}_k(t)$ actually allows to have an estimate of $\tilde{N}_k(t)/\tau_m$ instead of $\tilde{N}_k(t)$ and consequently an estimate of $\tau_m \theta_k$ instead of $\theta_k$.

\begin{figure}[htb]
\centering
\includegraphics[width=0.4\linewidth]{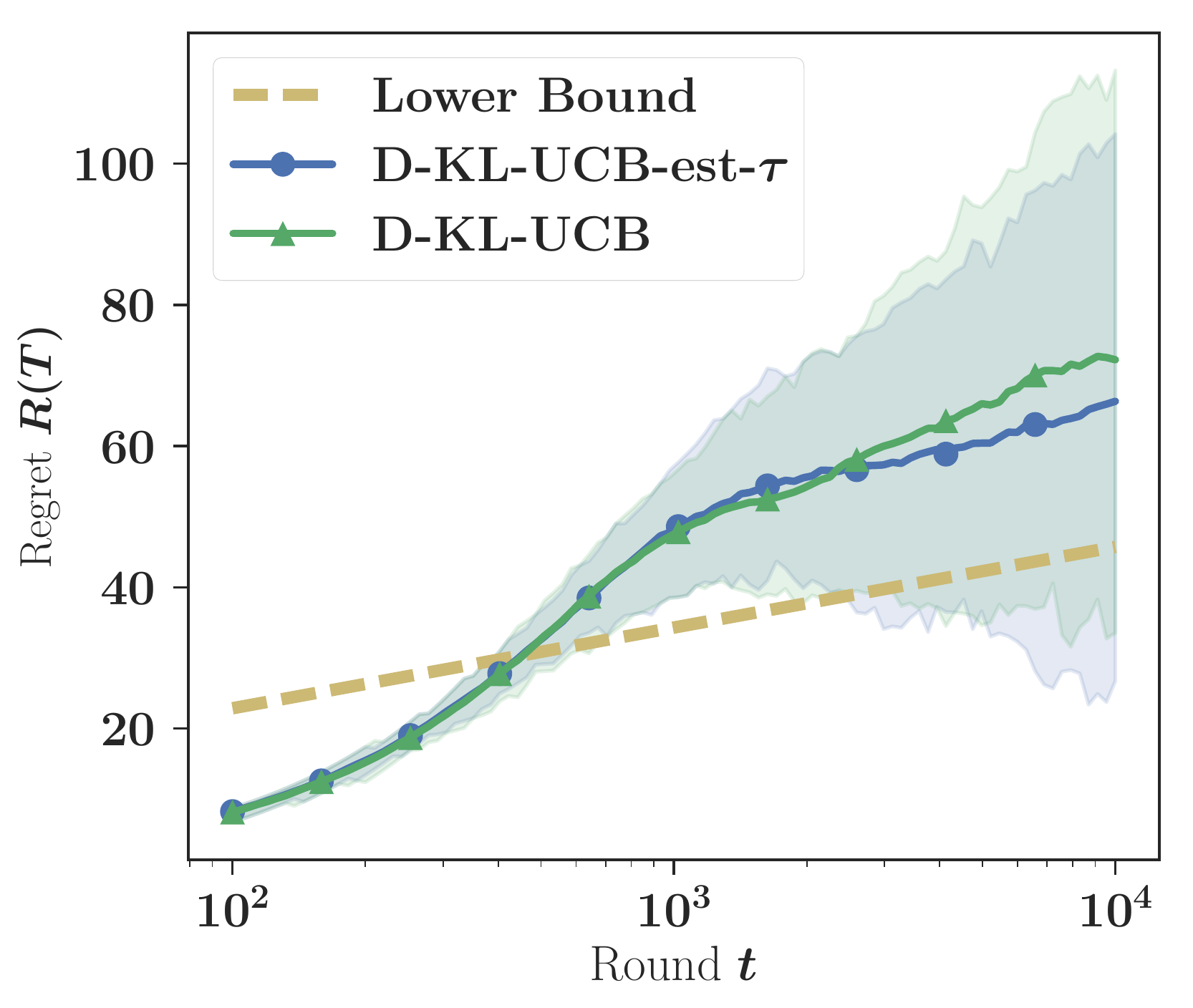}
\includegraphics[width=0.4\linewidth]{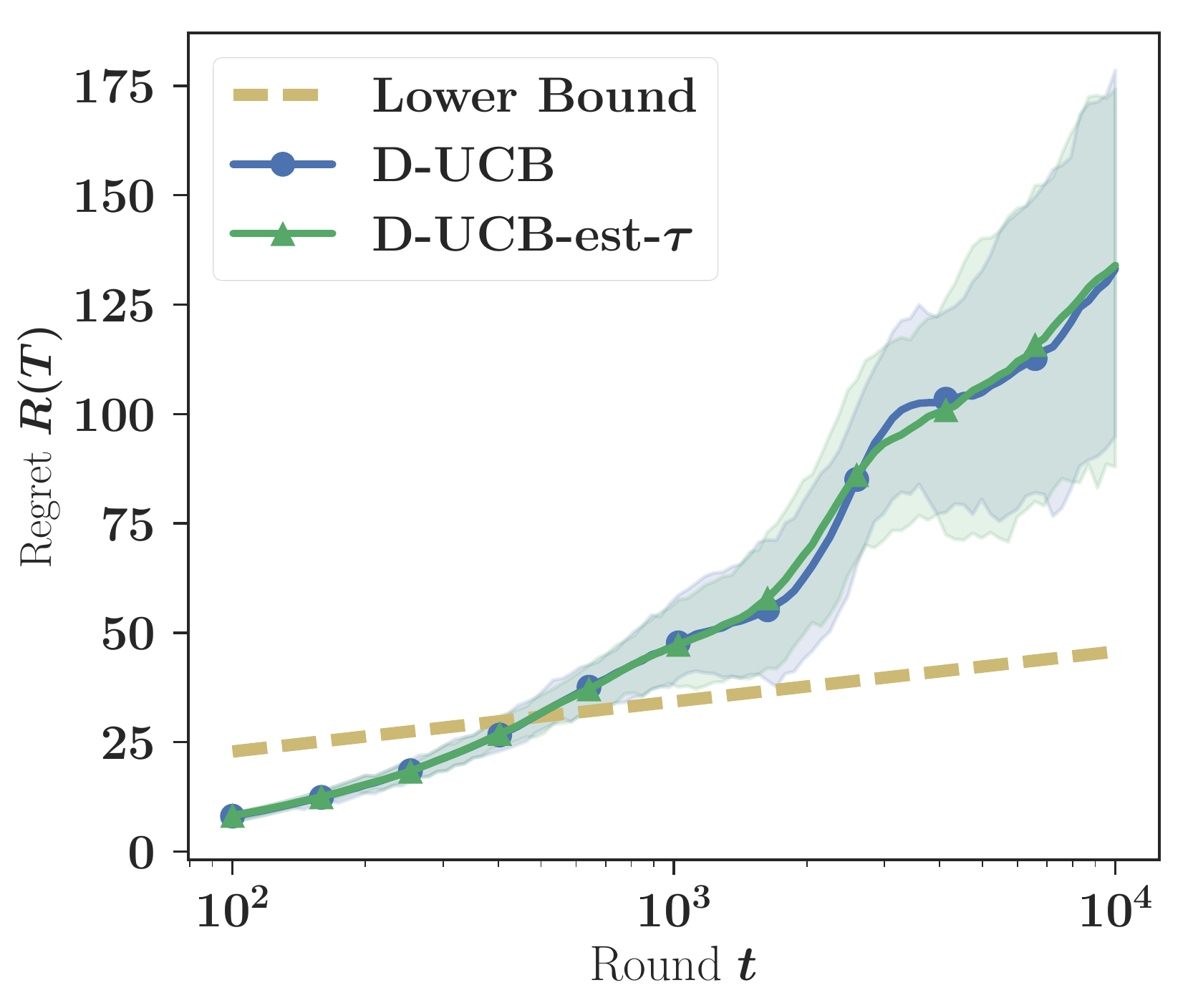}
\caption{Expected regret of \delayedKLUCB~ with and without online estimation of the CDF in both the censored and uncensored setting. Results are averaged over 100 independent runs.}
\label{fig:est}
\end{figure}

Figure~\ref{fig:est} compares both our policies to its equivalent, delay-agnostic heuristic using the same confidence intervals with plug-in estimates of the $(\tau_d)_{d\geq 0}$. It is clear from these experiments that using delay parameters estimated on-the-go does not hurt the cumulated regret overall.


\end{document}